\documentclass[11pt,letterpaper]{article}
\usepackage[margin=1in]{geometry}
\usepackage{graphicx}
\usepackage{multirow}
\usepackage{amsmath,amssymb,amsfonts,mathtools}
\usepackage{amsthm}
\usepackage{bm}
\usepackage{array}
\usepackage{booktabs}
\usepackage{tabularx}
\usepackage{algorithm}
\usepackage{algorithmicx}
\usepackage{algpseudocode}
\usepackage{xcolor}
\usepackage{microtype}
\usepackage{tikz}
\usetikzlibrary{arrows.meta,positioning,fit,calc,patterns}
\usepackage{subcaption}
\usepackage{caption}
\usepackage{makecell}
\usepackage{hyperref}
\usepackage{cleveref}
\usepackage{adjustbox}
\usepackage{placeins}
\algrenewcommand\algorithmicrequire{\textbf{Input:}}
\algrenewcommand\algorithmicensure{\textbf{Output:}}
\algrenewcommand\algorithmiccomment[1]{\hfill$\triangleright$ #1}
\newif\ifdraftmode
\draftmodetrue

\newcommand{\V}{\bm{\mathcal{V}}}

\newcommand{\fW}{\mathfrak{W}}
\newcommand{\fB}{\mathfrak{B}}
\newcommand{\cP}{\mathcal{P}}
\newcommand{\cQ}{\bm{\mathcal{Q}}}
\newcommand{\cW}{\bm{\mathcal{W}}}
\newcommand{\cG}{\bm{\mathcal{G}}}
\newcommand{\cV}{\bm{\mathcal{V}}}
\newcommand{\cE}{\bm{\mathcal{E}}}
\newcommand{\cC}{{\bm{\mathcal{C}}}}
\newcommand{\cU}{{\bm{\mathcal{U}}}}
\newcommand{\cR}{{\bm{\mathcal{R}}}}
\newcommand{\cY}{\bm{\mathcal{Y}}}
\newcommand{\cS}{\bm{\mathcal{S}}}
\newcommand{\cX}{\bm{\mathcal{X}}}
\newcommand{\bS}{\bm{S}}
\newcommand{\bX}{\bm{X}}
\newcommand{\tprod}{\mathbin{\ast}}
\newcommand{\tdag}{\dagger}

\newcommand{\supp}{\operatorname{supp}}

\def\fro{\textnormal{F}}
\newcommand{\RITCURTC}{\rm{R-ItCUR}}
\newcommand{\ITCURTC}{\rm{ITCURTC}}
\newcommand{\RTIHT}{\rm{Robust-IHT}}
\newcommand{\RIHT}{\rm{Robust-IHT}}

\theoremstyle{plain}
\newtheorem{theorem}{Theorem}[section]

\newtheorem{lemma}[theorem]{Lemma}

\theoremstyle{definition}
\newtheorem{definition}[theorem]{Definition}
\theoremstyle{remark}
\newtheorem{remark}[theorem]{Remark}
\theoremstyle{definition}
\newtheorem{assumption}[theorem]{Assumption}

\title{Robust Low-Tubal-Rank Tensor Completion under Cross-Concentrated Sampling}
\author{HanQin Cai\thanks{School of Data, Mathematical, and Statistical Sciences and Department of Computer Science, University of Central Florida, Orlando, FL 32816, USA}, 
Longxiu Huang\thanks{Department of Computational Mathematics, Science, and Engineering and Department of Mathematics, Michigan State University, East Lansing, MI 48840, USA}, 
Jing Qin\thanks{Department of Mathematics, University of Kentucky, Lexington, KY 40506, USA}, 
and Chengyue Wu\thanks{Department of Imaging Physics, Department of Biostatistics, Department of Breast Imaging, and Institute for Data Science in Oncology, University of Texas MD Anderson Cancer Center, Houston, TX 77054, USA}
}
\date{}

\begin{document}
\maketitle

\begin{abstract}
Tensor cross-concentrated sampling (t-CCS) bridges entrywise sampling and t-CUR slice-wise sampling by observing entries only within selected horizontal and lateral slices. Existing t-CCS completion methods, however, assume that the observations are free of gross corruption. In this work, we study robust recovery of a third-order low-tubal-rank tensor from partial t-CCS observations contaminated by sparse, arbitrarily large outliers. We propose \textit{Robust Iterative t-CUR} (\RITCURTC), a tensor-native algorithm that partitions the sampled tensor cross into two exterior blocks and an intersection block, applies adaptive blockwise Welsch correction for outlier suppression, and updates the low-rank component through projected blockwise gradient descent. By operating directly on the sampled cross, \RITCURTC{} avoids reconstructing the full tensor throughout the iterations, resulting in substantial memory and computational savings. Experiments on synthetic tensors, cardiac MRI data, and three-dimensional seismic data demonstrate accurate recovery and strong robustness to sparse gross corruptions. The results further highlight the importance of explicitly exploiting the cross-concentrated sampling structure in robust tensor completion.
\end{abstract}

\noindent\textbf{Keywords:} tensor completion, tubal rank, t-CUR decomposition, cross-concentrated sampling, sparse outliers\\

\noindent\textbf{MSC Classification:} 15A69, 15A83, 65K10, 68T09

\section{Introduction}\label{sec:intro}
Tensors provide a natural representation for multi-way data arising in imaging, signal processing, and data science, including image processing \cite{liu2013tensor,kilmer2011factorization}, hyperspectral imagery~\cite{chang2017weighted,cai2021modewisedecomp}, seismic volumes~\cite{kreimer2012tensor}, and recommendation data~\cite{karatzoglou2010multiverse}. In many practical applications, however, such data are only partially observed because of acquisition constraints, transmission loss, or the high cost of dense measurements. Recovering the missing entries of a low-rank tensor from partial observations, known as the \textit{tensor completion} (TC) problem, has therefore become a fundamental task.

Since tensor rank is not uniquely defined, the choice of low-rank model plays a crucial role in both the formulation and performance of TC algorithms. Among the many notions of tensor rank, the \emph{tubal rank} induced by the tensor--tensor product (t-product) and the tensor singular value decomposition (t-SVD)~\cite{kilmer2011factorization} has proved particularly effective for third-order tensors with a distinguished mode, such as temporal or spectral dimensions. Moreover, the truncated t-SVD yields the optimal low-tubal-rank approximation in the least-squares sense~\cite{zhang2014novel,zhang2016exact,newman2025optimal}. Motivated by these advantages, we adopt the low-tubal-rank model throughout this paper.

\smallskip
\noindent\textbf{Cross-concentrated sampling.}
The recoverability of a low-rank tensor depends not only on the number of observed entries but also on \textit{how the observations are collected}. The dominant sampling model in the TC literature is entrywise Bernoulli or uniform sampling, under which exact recovery guarantees have been established for tensor nuclear norm minimization~\cite{zhang2014novel,zhang2016exact}. However, this model assumes that every entry is equally likely to be observed, an assumption that is often violated in practice. For example, in collaborative filtering, a small fraction of users may contribute most ratings~\cite{marlin2012collaborative}, while in medical imaging fully uniform acquisition may be physically infeasible or undesirable~\cite{kahn2014variable}. A complementary line of work considers \textit{structured} sampling schemes that observe entire fibers or slices, leading to fiber-CUR and t-CUR decompositions~\cite{cai2021modewisedecomp,chen2022tensor,ahmadi2024adaptive}. These methods reconstruct a tensor from a small subset of its own horizontal and lateral slices and inherit the interpretability and computational efficiency of classical matrix CUR decompositions~\cite{mahoney2009cur,drineas2008relative,hamm2020perspectives,hamm2020stability}. While slice-wise sampling is attractive when entire slices can be acquired efficiently, it may be prohibitively expensive or even infeasible when no slice is readily available in its entirety.

To bridge the gap between these two sampling models, \textit{Cross-Concentrated Sampling} (CCS) was recently introduced for matrices~\cite{cai2023ccs}. Its tensor counterpart, t-CCS, together with a nonconvex solver, was subsequently proposed for low-tubal-rank tensor completion~\cite{su2024tccs}. Under the t-CCS model, a subset of horizontal and lateral slices is first selected, after which entries are sampled only within those slices. Consequently, t-CCS interpolates between several existing sampling paradigms: it reduces to t-CUR sampling when the selected slices are densely observed, and to Bernoulli or uniform sampling when all slices are selected. An illustration of t-CCS is provided in \Cref{fig:1x4layout1}.

\begin{figure}[!th]
    \centering
    \begin{minipage}{0.22\linewidth}
        \includegraphics[width=\linewidth]{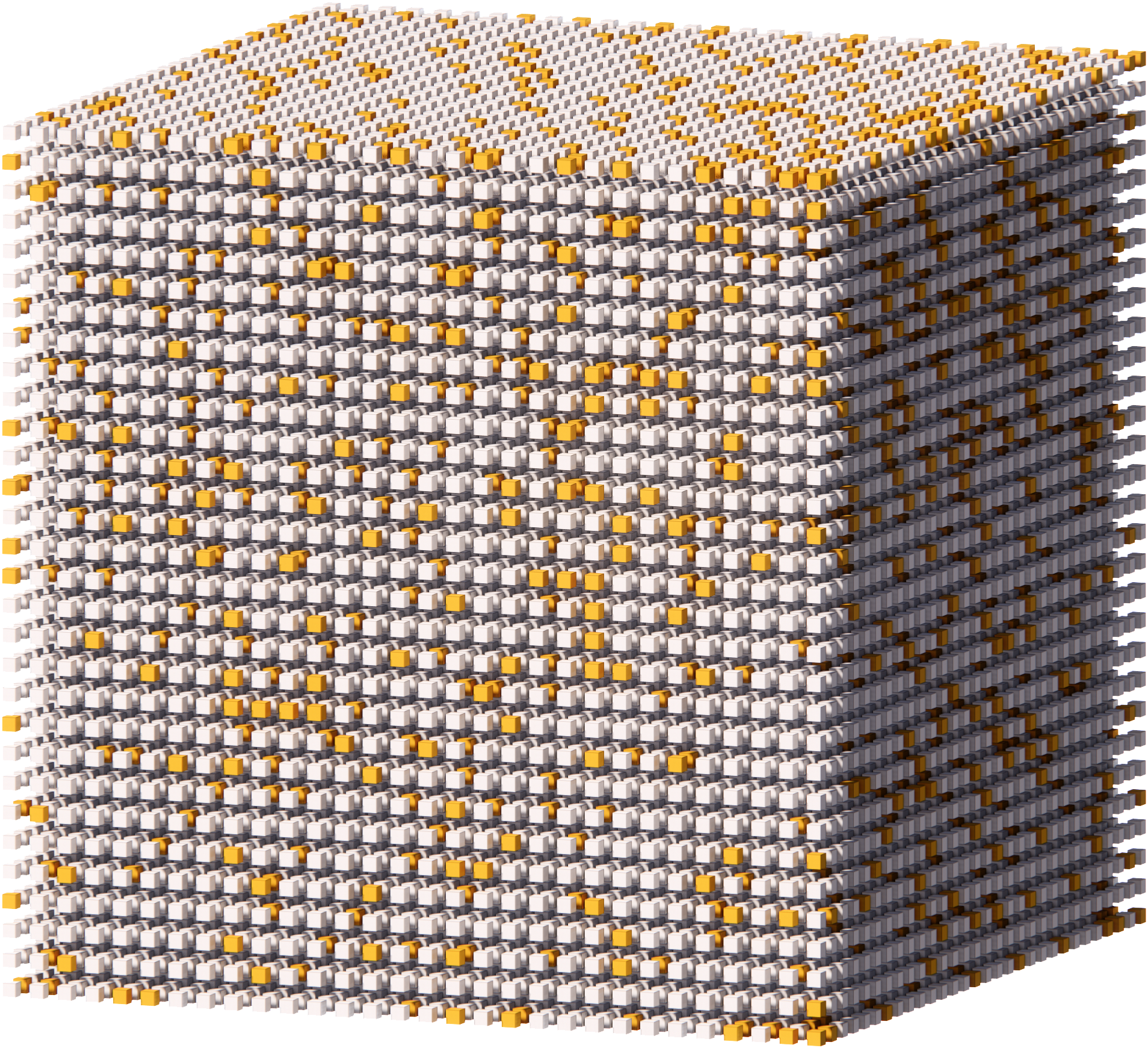}
        \subcaption*{\footnotesize{Bernoulli sampling}}
    \end{minipage}
    \hfill
    \begin{minipage}{0.22\linewidth}
        \includegraphics[width=\linewidth]{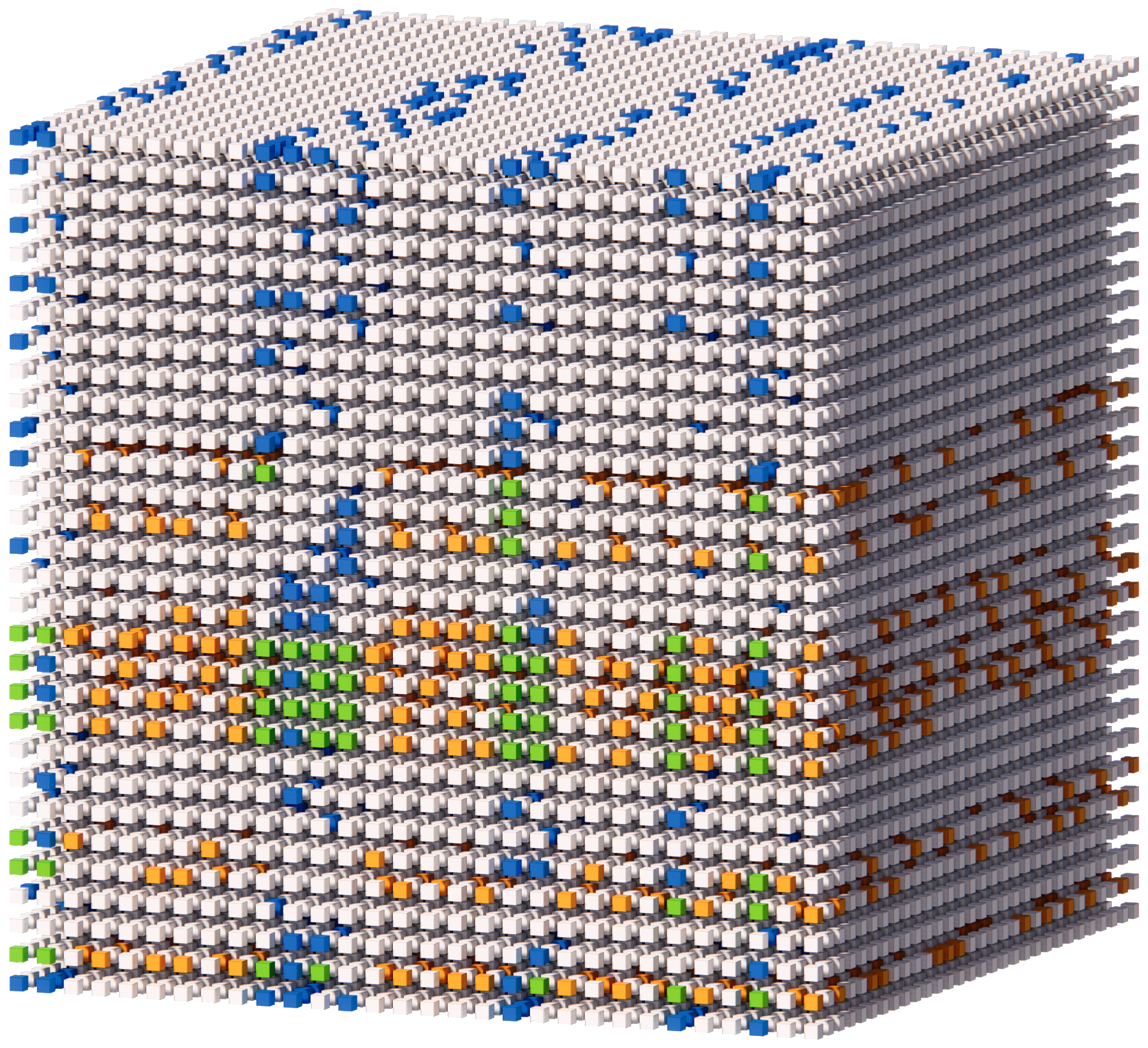}
        \subcaption*{\footnotesize{t-CCS less}}
    \end{minipage}
    \hfill
    \begin{minipage}{0.22\linewidth}
        \includegraphics[width=\linewidth]{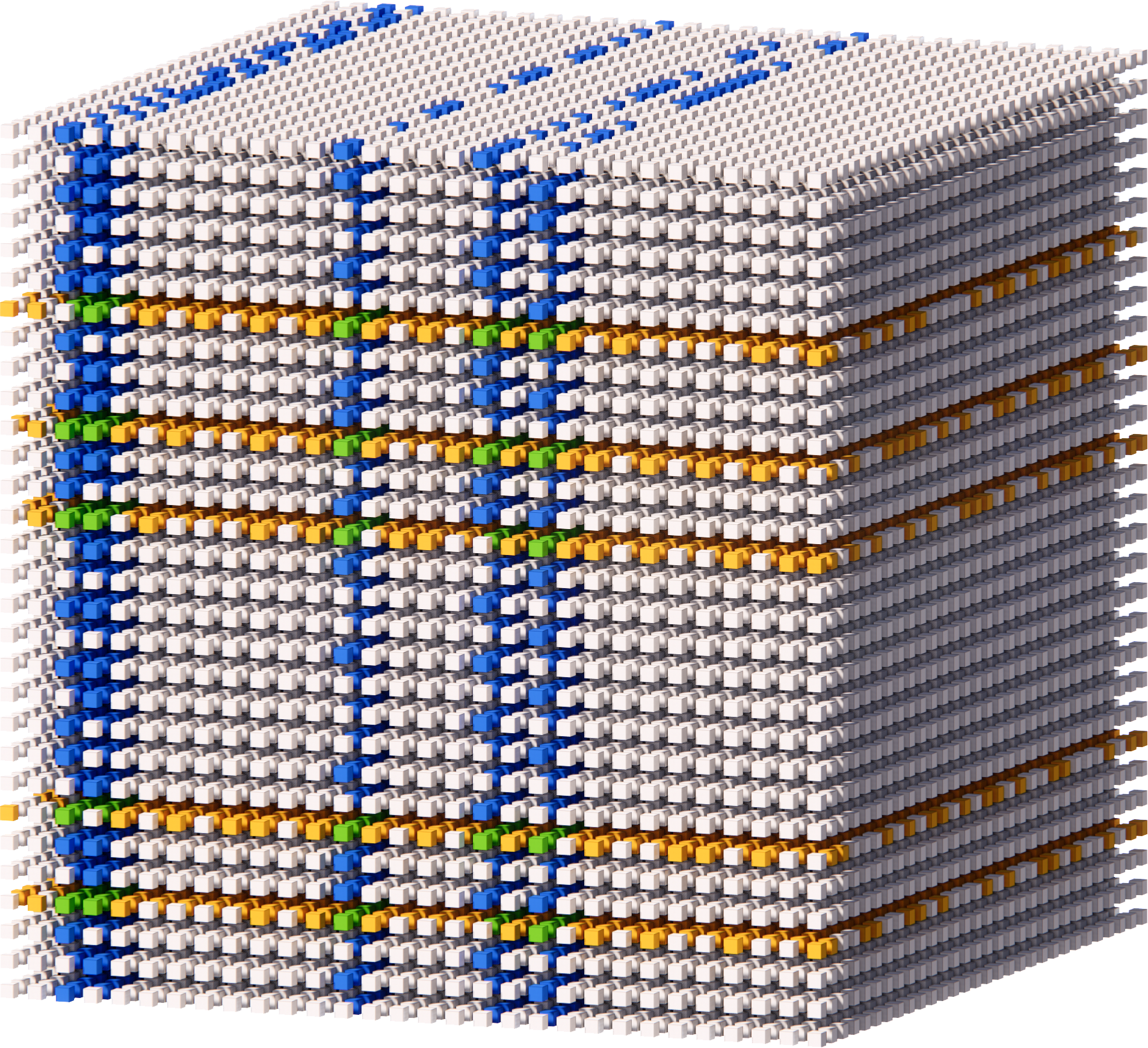}
        \subcaption*{\footnotesize{t-CCS more}}
    \end{minipage}
    \hfill
    \begin{minipage}{0.22\linewidth}
        \includegraphics[width=\linewidth]{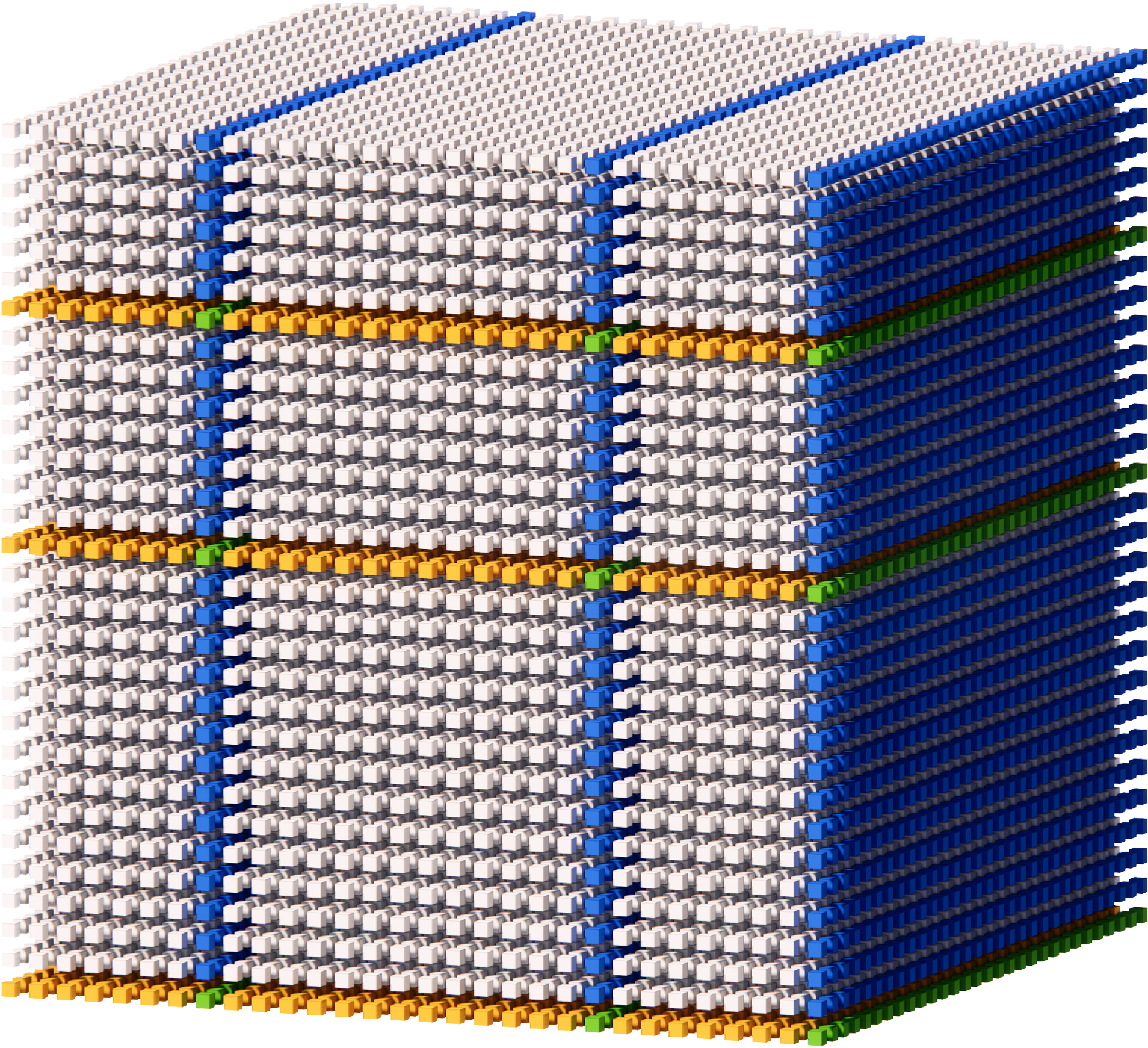}
        \subcaption*{\footnotesize{Tensor CUR sampling}}
    \end{minipage}
    \caption{\cite{cai2024rccs}. Illustration of four sampling strategies at the same total observation rate: Bernoulli, less concentrated t-CCS, more concentrated t-CCS, and tensor CUR. Blue, yellow and green grids mark the observed entries in the lateral, horizontal and intersected subtensors, respectively.}\label{fig:1x4layout1}
\end{figure}

\smallskip
\noindent\textbf{Sparse corruption.}
A second gap between existing theory and practice arises from data corruption. Real-world measurements are often corrupted not only by small dense noise but also by \textit{gross, sparse outliers}, such as dead pixels in images, saturated spectral measurements, corrupted seismic traces, and moving foreground objects in video sequences. For fully observed low-rank data, this challenge is addressed by robust principal component analysis (RPCA), which seeks to decompose the data into low-rank and sparse components. For matrices, RPCA has been studied extensively using both convex relaxations~\cite{candes2011robust} and scalable nonconvex algorithms~\cite{netrapalli2014nonconvex,yi2016fast,cai2019accaltproj,cai2021lrpca,tong2021accelerating}. Its tensor analogue under the t-product framework is known as tensor RPCA~\cite{lu2019tensor}.

The situation becomes substantially more challenging when the data are both incomplete and corrupted, particularly under structured sampling patterns. For matrices, the recent work~\cite{cai2024rccs} demonstrated that CCS remains robust to sparse outliers when coupled with a carefully designed solver. Whether a similar robustness property holds in the tensor setting, however, remains unknown. This naturally leads to the central question of this paper:
\begin{center}
\fbox{\begin{minipage}{0.8\linewidth}
\textit{Can low-tubal-rank tensors still be recovered under t-CCS sampling in the presence of sparse outliers?}
\end{minipage}}
\end{center}

\subsection{Related Work}
\label{subsec:related-work}

Robust data completion aims to recover a structured low-rank object from incomplete observations contaminated by sparse heavy-tailed errors. In the matrix setting, convex robust matrix completion methods combine nuclear-norm regularization for the low-rank component with an entrywise sparsity penalty for the corruptions~\cite{candes2011robust,chen2013low,klopp2017robust}. To improve scalability, nonconvex approaches instead alternate low-rank updates with thresholding or sparsification of the observed residual~\cite{yi2016fast,cherapanamjeri2017nearly,cai2026lrmc}. Robust-loss formulations offer another alternative by reducing the influence of large residuals through, for example, least absolute deviations and maximum correntropy \cite{li2020nonconvex,giampouras2025opsa,he2019robust}. Together, these methods establish the main algorithmic paradigms for jointly handling missing data and gross outliers. When the target possesses additional algebraic structure, however, explicitly incorporating that structure can lead to more specialized and efficient recovery methods.

Robust Hankel completion provides a representative example of structure-aware approaches. Robust-EMaC recovers the low-rank Hankel matrix from incomplete and corrupted samples using structured nuclear-norm minimization~\cite{chen2013spectral}. More recent nonconvex methods exploit the Hankel structure directly.
SAP uses structured alternating projections for fast low-rank Hankel recovery with guaranteed linear convergence~\cite{zhang2019correction}.
HSGD combines outlier estimation with factorized, structure-preserving low-rank updates~\cite{cai2023structured}, while HSNLD uses preconditioned updates to accelerate ill-conditioned robust Hankel recovery and achieves a linear convergence rate independent of the condition number~\cite{cai2025accelerating}. These developments illustrate the broader
benefit of adapting robust completion algorithms to the structure of the
target object. Hankel matrices provide matrix representations of
convolutional operators, while the t-product extends this convolutional
viewpoint to third-order tensors and underlies the tubal-rank model.

Within this broader setting, robust tensor completion has been studied under
several notions of tensor rank. Representative approaches use matrix unfoldings, CP factorization, tensor-ring representations, or nonconvex rank and sparsity surrogates~\cite{goldfarb2014robust,zhao2015bayesian,huang2020robust,zhao2022robust}. Among these models, the t-SVD framework is particularly relevant to the present work. For instance, \cite{jiang2019robust} proposed a convex formulation combining the tensor nuclear norm with a vectorized $\ell_1$ penalty and established exact-recovery guarantees. Nonconvex low-tubal-rank formulations were subsequently developed to reduce the bias and computational cost associated with convex surrogates \cite{wang2019robust}. Recent work~\cite{he2024correntropy} introduced a factorized model based on maximum correntropy and half-quadratic minimization. For a fixed kernel scale, maximizing Gaussian correntropy is equivalent, up to an additive constant and positive scaling, to minimizing a Welsch loss, making their formulation the closest robust-loss approach to the correction adopted in this paper. More recently, ScaledGD has provided condition-number-independent linear convergence guarantees for several low-tubal-rank estimation problems, including robust tensor completion \cite{wu2026guaranteed}. Despite their different optimization strategies, these methods are designed for observations sampled entrywise from the ambient tensor and do not exploit a tensor-cross representation.

CCS-based completion addresses this limitation from a complementary computational perspective. Matrix CCS interpolates between entrywise sampling and CUR row--column sampling \cite{cai2023ccs}, while Robust CUR Completion extends this observation model to matrix data contaminated by sparse outliers \cite{cai2024rccs}. In the tensor setting, t-CCS and \ITCURTC{} extend the same sampling flexibility to low-tubal-rank completion, but assume that the observations are free of gross corruptions \cite{su2024tccs}. A related but distinct method, Robust Tensor CUR, combines a tensor-CUR construction with sparse-corruption recovery \cite{cai2024robust}. However, it addresses robust decomposition of low-Tucker-rank tensors rather than low-tubal-rank completion from partial observations confined to a sampled tensor cross.

\subsection{Main Contributions}

The main contributions of this work are summarized as follows.

\begin{itemize}

\item We formulate a robust low-tubal-rank tensor completion problem under tensor cross-concentrated sampling (t-CCS), where observations are collected only within selected horizontal and lateral slices and may be contaminated by sparse gross outliers. The resulting sampled tensor cross naturally decomposes into three disjoint blocks, leading to a block-separable data-fidelity objective.

\item We propose \emph{Robust Iterative t-CUR} (\RITCURTC), a novel tensor-native algorithm for robust tensor completion under t-CCS. By maintaining an implicit t-CUR representation and updating only the sampled cross blocks, \RITCURTC{} avoids reconstructing the full tensor during the iterations and therefore achieves substantial memory and computational savings.

\item Extensive experiments on both synthetic and real-world datasets demonstrate the effectiveness and robustness of \RITCURTC. The results show that explicitly exploiting the cross-concentrated sampling structure significantly improves recovery performance in the presence of sparse gross outliers.

\end{itemize}

\section{Notation and Preliminaries}
\label{sec:prelim}

\noindent\textbf{Notation.}
We denote tensors by calligraphic capital letters, such as
$\cX$, matrices by uppercase letters, such as $\bX$,   vectors
by lowercase bold letters, such as $\bm{x}$, and scalars by regular letters, such as $x$.We use $\mathbf{0}$ to denote the zero vector or zero tensor, with the appropriate dimension inferred from the context, and $\mathbb{I}_n$ the $n\times n$ identity matrix. For a matrix $A$,  $A^*$ denotes its conjugate transpose, while $\cX^\top$ denotes the tensor conjugate transpose of $\cX$.
For a positive integer $n$, we denote
\(
    [n]:=\{1,\ldots,n\}.
\)
For a finite set $\bS$, its cardinality is denoted by $|\bS|$. If
$I\subseteq[n]$, then $I^{c}:=[n]\setminus I$ denotes its
complement.

For a third-order tensor
$\cX\in\mathbb R^{n_1\times n_2\times n_3}$,
$[\cX]_{i,j,k}$ denotes its $(i,j,k)$-th entry,
$[\cX]_{i,j,:}$ its $(i,j)$-th tube, and
$[\cX]_{:,:,k}$ its $k$-th frontal slice. The horizontal and
lateral slices of $\cX$ are denoted by
$[\cX]_{i,:,:}$ and $[\cX]_{:,j,:}$, respectively.
For index sets $I\subseteq[n_1]$ and $J\subseteq[n_2]$, we write
\(
    [\cX]_{I,:,:},
    [\cX]_{:,J,:},
    [\cX]_{I,J,:}
\)
for the corresponding subtensors.
The discrete Fourier transform along the third mode is denoted by
\[
    \widehat{\cX}
    :=
    \operatorname{fft}(\cX,[],3),
\]
and
\(
    \widehat{\bX}^{(k)}
    :=
    [\widehat{\cX}]_{:,:,k}
\)
denotes its $k$-th frontal slice. The Frobenius norm and the
infinity norm of $\cX$ are defined by
\[
    \|\cX\|_{\fro}
    :=
    \bigg(
        \sum_{i,j,k}
        |[\cX]_{i,j,k}|^2
    \bigg)^{1/2},
    \qquad
    \|\cX\|_{\infty}
    :=
    \max_{i,j,k}|[\cX]_{i,j,k}|.
\]
For a matrix $A$, we use $A^*$ for its conjugate transpose and
$A^\dagger$ for its Moore--Penrose inverse. For a tensor,
$\cX^\top$ denotes the tensor conjugate transpose defined below.
The Kronecker product is denoted by $\otimes$, and $\bm{e}_i$
denotes the $i$-th standard basis vector of the appropriate dimension.
For an index set
\(
    \Omega\subseteq[n_1]\times[n_2]\times[n_3],
\)
the sampling operator $\mathcal P_\Omega$ is defined entrywise by
\begin{equation}
    [\mathcal P_\Omega(\cX)]_{i,j,k}
    =
    \begin{cases}
        [\cX]_{i,j,k},
        &(i,j,k)\in\Omega;\\
        0,
        &(i,j,k)\notin\Omega.
    \end{cases}
    \label{eq:sampling-operator}
\end{equation}
The support of a tensor $\cS$ is denoted by
\[
    \supp(\cS)
    :=
    \{(i,j,k):[\cS]_{i,j,k}\neq0\}.
\]

\smallskip
\begin{definition}[t-product~\cite{kilmer2011factorization}]
For
$\cX\in\mathbb R^{n_1\times n_2\times n_3}$, let
\(
    \operatorname{bcirc}(\cX)
    \in
    \mathbb R^{n_1n_3\times n_2n_3}
\)
denote the block-circulant matrix formed from its frontal slices, and
let $\operatorname{unfold}(\cdot)$ and $\operatorname{fold}(\cdot)$ denote the
standard stacking and inverse-stacking operators. The
\emph{t-product} of
\(
    \cX\in\mathbb R^{n_1\times n_2\times n_3}
    \text{~and~}
    \cG\in\mathbb R^{n_2\times n_4\times n_3}
\)
is defined by
\begin{equation*}
    \cX*\cG
    :=
    \operatorname{fold}
    \left(
        \operatorname{bcirc}(\cX)
        \operatorname{unfold}(\cG)
    \right).
\end{equation*}
\end{definition}
A block-circulant matrix can be block-diagonalized by the DFT. More
precisely, if $F_{n_3}$ denotes the DFT matrix consistent with the
Fourier-transform convention above, then
\begin{equation}
\begin{aligned}
    \overline{\cX}
    &:=
    (\bm{F}_{n_3}\otimes \mathbb{I}_{n_1})
    \operatorname{bcirc}(\cX)
    (\bm{F}_{n_3}^{-1}\otimes \mathbb{I}_{n_2})=
    \operatorname{blkdiag}
    \bigl(
        \widehat \bX^{(1)},
        \ldots,
        \widehat \bX^{(n_3)}
    \bigr).
\end{aligned}
\label{eq:blockdiag}
\end{equation}
The right-hand side of \eqref{eq:blockdiag} is the block-diagonal matrix with diagonal blocks $\widehat{\bX}^{(1)}$,..., $\widehat{\bX}^{(n_3)}$. Consequently, the t-product reduces to ordinary matrix multiplication
on the Fourier slices:
\(
    \widehat{(\cX*\cG)}^{(k)}
    =
    \widehat \bX^{(k)}
    \widehat{\bm{G}}^{(k)},
    \forall\, k\in[n_3].
\)

The identity tensor $\mathcal I\in\mathbb R^{n\times n\times n_3}$ has $\mathbb{I}_n$ as its first frontal slice and zero matrices as all remaining frontal slices. Equivalently, $\widehat I^{(k)}=\mathbb{I}_n$ for every $k\in[n_3]$, and therefore
\(
    \mathcal I*\cX =  \cX*\mathcal I = \cX.
\)
The tensor conjugate transpose
$\cX^\top\in
\mathbb R^{n_2\times n_1\times n_3}$
is obtained by conjugate-transposing each frontal slice and reversing
the order of slices $2,\ldots,n_3$. Equivalently,
\(
    \overline{\cX^\top}
    =
    \overline{\cX}^{\,*},
\)
where $\overline{\cX}^{\,*}$ is the complex conjugate of the block-diagonal matrix $\overline{\cX}$ defined in \eqref{eq:blockdiag}.
A tensor is called \emph{$f$-diagonal} if every frontal slice is a
diagonal matrix. A square tensor
$\cQ\in\mathbb R^{n\times n\times n_3}$
is called \emph{orthogonal} if
\(
    \cQ^\top*\cQ
    =
    \cQ*\cQ^\top
    =
    \mathcal I.
\)
Equivalently, each Fourier slice $\widehat{\bm{Q}}^{(k)}$ is unitary.

\smallskip
\begin{definition}[Tubal rank \cite{kilmer2011factorization}]
For $\cX\in\mathbb R^{n_1\times n_2\times n_3}$, define
\(
    r_k
    :=
    \operatorname{rank}(\widehat X^{(k)}),
    k\in[n_3].
\)
The vector
\begin{equation}
    \operatorname{rank}_m(\cX)
    :=
    \bm{r}
    =
    (r_1,\ldots,r_{n_3})
    \label{eq:multirank}
\end{equation}
is called the \emph{multi-rank} of $\cX$, and
\[
    \operatorname{rank}_t(\cX)
    :=
    \|\bm{r}\|_\infty
    =
    \max_{k\in[n_3]}r_k
\]
is called its \emph{tubal rank}. Moreover, \(
    \|\bm{r}\|_1 = \sum_{k=1}^{n_3}r_k =\operatorname{rank}(\overline{\cX})
\)
is the aggregate Fourier-domain rank.
\end{definition}

\smallskip
\begin{definition}[t-SVD~\cite{kilmer2011factorization}]
    The \emph{t-SVD}  of
$\cX$ is a factorization
\begin{equation}
    \cX
    =
    \cW*
    \boldsymbol{\Theta}*
    \cV^\top,
    \label{eq:t-svd}
\end{equation}
where
\(
    \cW
    \in\mathbb R^{n_1\times n_1\times n_3},
    \cV
    \in\mathbb R^{n_2\times n_2\times n_3}
\)
are orthogonal tensors and
$\boldsymbol{\Theta}\in
\mathbb R^{n_1\times n_2\times n_3}$
is $f$-diagonal. Equivalently, for every $k\in[n_3]$,
\(
    \widehat \bX^{(k)}
    =
    \widehat{\bm{W}}^{(k)}
    \widehat\Theta^{(k)}
    \widehat{\bm{V}}^{(k)*}
\)
is an ordinary matrix SVD.

If $\operatorname{rank}_t(\cX)=r$, its compact t-SVD is written
as
\(
    \cX
    =
    \cW*
    \boldsymbol{\Theta}*
    \cV^\top,
    \)
where
\(
    \cW\in\mathbb R^{n_1\times r\times n_3},
    \boldsymbol{\Theta}
        \in\mathbb R^{r\times r\times n_3},
    \cV\in\mathbb R^{n_2\times r\times n_3}.
\)
\end{definition}
The Moore--Penrose t-inverse $\cX^\dagger$ is defined by
\(
    \widehat{\bX^\dagger}^{(k)}
    =
    \bigl(\widehat{\bX}^{(k)}\bigr)^\dagger,
    k\in[n_3].
\)
The tensor spectral norm and condition number are defined as
\[
    \|\cX\|
    :=
    \|\operatorname{bcirc}(\cX)\|_2
    =
    \max_{k\in[n_3]}
    \|\widehat{\bX}^{(k)}\|_2,
    \qquad
    \kappa(\cX)
    :=
    \|\cX\|\,
    \|\cX^\dagger\|.
\]

\smallskip
\noindent\textbf{The t-CUR decomposition.}
The following result provides the algebraic basis for the proposed
cross-based reconstruction method.

\smallskip
\begin{theorem}[Exact t-CUR decomposition
{\cite{su2024tccs,hamm2020perspectives}}]
\label{thm:tcur}
Let
$\cX\in\mathbb R^{n_1\times n_2\times n_3}$
have multi-rank $\bm{r}$. For
$I\subseteq[n_1]$ and $J\subseteq[n_2]$, define
\[
    \cR
    :=
    [\cX]_{I,:,:},
    \qquad
    \cC
    :=
    [\cX]_{:,J,:},
    \qquad
    \cU
    :=
    [\cX]_{I,J,:}.
\]
Then \( \cX
    =
    \cC*
    \cU^\dagger*
    \cR\)
if and only if
\(
    \operatorname{rank}_m(\cR) = \operatorname{rank}_m(\cC)
    = \bm{r}.
\)
Equivalently, the intersection tensor satisfies
$\operatorname{rank}_m(\cU)=\bm{r}$.
\end{theorem}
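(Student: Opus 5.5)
The plan is to transfer the statement to the Fourier domain, where it splits into $n_3$ independent matrix CUR statements, and then invoke the matrix CUR characterization of \cite{hamm2020perspectives} slice by slice.

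\emph{Step 1: pass to the Fourier domain.} Since subtensor extraction acts only on the first two modes, it commutes with the mode-3 DFT. Writing $\widehat{\bX}^{(k)}$ for the $k$-th Fourier slice of $\cX$, the Fourier slices of the three sampled blocks are
\[
    \widehat{\bm{C}}^{(k)}=\bigl[\widehat{\bX}^{(k)}\bigr]_{:,J},\qquad
    \widehat{\bm{R}}^{(k)}=\bigl[\widehat{\bX}^{(k)}\bigr]_{I,:},\qquad
    \widehat{\bm{U}}^{(k)}=\bigl[\widehat{\bX}^{(k)}\bigr]_{I,J}.
\]
By \eqref{eq:blockdiag}, the t-product becomes slicewise matrix multiplication. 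By the definition of the t-inverse, $\widehat{\cU^\dagger}^{(k)}=(\widehat{\bm U}^{(k)})^\dagger$. Because the DFT along mode 3 is invertible, $\cX=\cC*\cU^\dagger*\cR$ holds if and only if
\[
    \widehat{\bX}^{(k)}=\widehat{\bm C}^{(k)}\bigl(\widehat{\bm U}^{(k)}\bigr)^\dagger\widehat{\bm R}^{(k)}
    \quad\text{for every }k\in[n_3].
\]
In the same way, the multi-rank conditions $\operatorname{rank}_m(\cR)=\operatorname{rank}_m(\cC)=\bm r$ hold if and only if $\operatorname{rank}\widehat{\bm R}^{(k)}=\operatorname{rank}\widehat{\bm C}^{(k)}=r_k=\operatorname{rank}\widehat{\bX}^{(k)}$ for every $k$. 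Similarly, $\operatorname{rank}_m(\cU)=\bm r$ means $\operatorname{rank}\widehat{\bm U}^{(k)}=r_k$ for every $k$.

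\emph{Step 2: apply the matrix result slice by slice.} For a complex matrix $A$ with submatrices $C=A_{:,J}$, $R=A_{I,:}$ and $U=A_{I,J}$, the matrix theorem of \cite{hamm2020perspectives} states that $A=CU^\dagger R$ if and only if $\operatorname{rank}U=\operatorname{rank}A$, and that this in turn is equivalent to $\operatorname{rank}C=\operatorname{rank}R=\operatorname{rank}A$. I would check that the cited proof does not use realness, since the Fourier slices are complex in general. If a self-contained argument is preferred, I would argue in the same order.

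First, sufficiency from $\operatorname{rank}U=\operatorname{rank}A$. Since $U$ is a submatrix of both $C$ and $R$, we get $\operatorname{rank}A\ge\operatorname{rank}C\ge\operatorname{rank}U$, so $\operatorname{rank}C=\operatorname{rank}A$ and hence $\operatorname{col}(C)=\operatorname{col}(A)$. Likewise $\operatorname{row}(R)=\operatorname{row}(A)$. We can therefore write $A=CX$ for some $X$, and then $R=C_{I,:}X=UX$. This gives $CU^\dagger R=CU^\dagger UX$. Now $\operatorname{rank}U=\operatorname{rank}C$, and the rows of $U$ are rows of $C$, so $\operatorname{row}(U)=\operatorname{row}(C)$. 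Hence $U^\dagger U$ is the projector onto $\operatorname{row}(C)$, so $CU^\dagger U=C$ and $CU^\dagger R=CX=A$.

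Second, necessity. If $A=CU^\dagger R$, then restricting to the rows $I$ and columns $J$ gives $U=UU^\dagger U$, which is automatic and carries no information. Instead, the rank bound $\operatorname{rank}A\le\operatorname{rank}U^\dagger=\operatorname{rank}U\le\operatorname{rank}A$ forces $\operatorname{rank}U=\operatorname{rank}A$.

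Third, the equivalence between $\operatorname{rank}C=\operatorname{rank}R=\operatorname{rank}A$ and $\operatorname{rank}U=\operatorname{rank}A$. Write $A=WV^*$ with $W$ having full column rank $r$. Then $\operatorname{rank}C=r$ means $V_{J,:}$ has rank $r$, and $\operatorname{rank}R=r$ means $W_{I,:}$ has rank $r$. In that case $U=W_{I,:}V_{J,:}^*$ is a product of full-rank factors and has rank $r$. The converse direction follows from the submatrix inequality already used above.

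\emph{Step 3: reassemble.} The slicewise equivalences from Step 2 combine over $k\in[n_3]$ to give exactly the two tensor equivalences in the statement.

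The main obstacle is the bookkeeping in Step 1: making sure sampling commutes with the DFT, and that the t-inverse is taken slicewise in the same Fourier convention as the t-product. Once that is verified, everything reduces to the matrix argument in Step 2.
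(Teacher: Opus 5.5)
Your proposal is correct and takes essentially the same route as the paper: the paper gives no proof of its own and simply cites \cite{su2024tccs,hamm2020perspectives}, and the argument behind that citation is the mode-3 DFT block-diagonalization followed by the matrix CUR characterization applied to each Fourier slice. Your self-contained slicewise argument is valid over $\mathbb{C}$: equal kernels of $\widehat{\bm U}^{(k)}$ and $\widehat{\bm C}^{(k)}$ give $\widehat{\bm C}^{(k)}(\widehat{\bm U}^{(k)})^\dagger\widehat{\bm U}^{(k)}=\widehat{\bm C}^{(k)}$, the rank bound gives necessity, and the full-rank factorization gives the equivalence with $\operatorname{rank}\widehat{\bm U}^{(k)}=r_k$.
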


\section{Problem Formulation}\label{sec:model}

Consider the corrupted data tensor:
\begin{equation}\label{eq:model}
\cY=\cX^\star+\cS^\star,
\end{equation}
where $\cX^\star\in\mathbb{R}^{n_1\times n_2\times n_3}$ is a low-tubal-rank tensor with $\operatorname{rank}_t(\cX^\star)=r$ and $\cS^\star$ is a sparse tensor representing gross outliers. Rather than observing $\cY$ in its entirety, only a subset of its entries is available according to the t-CCS sampling pattern over the index set $\Omega_{\cR}\cup\Omega_{\cC}$, which is defined in \Cref{sec:sampling}. The \emph{robust t-CCS completion} problem aims to
recover $\cX^\star$ from these incomplete and corrupted observations by solving:
\begin{equation}\label{eq:robust t-ccs}
\min_{\cX,\,\cS}\
\big\|\mathcal{P}_{\Omega_{\cR}\cup\Omega_{\cC}}
(\cX+\cS-\cY)\big\|_\fro^2
\ \ \text{s.t.}\ \
\mathrm{rank}_t(\cX)=r \textnormal{ and }
\cS\ \text{is $\alpha$-sparse},
\end{equation}
where $\mathcal{P}_\Omega$ denotes the projection operator onto the sampling set $\Omega$, defined in \eqref{eq:sampling-operator}, and the precise definition of $\alpha$-sparsity is given in \Cref{sec:sparsity}. Note that when
$\cS=\bm{0}$, i.e., when the sparse component is absent,  \eqref{eq:robust t-ccs} reduces to the standard t-CCS completion problem studied in \cite{su2024tccs}.

\subsection{General t-CCS Procedure}\label{sec:sampling}

The t-CCS model proceeds in two stages. First, a subset of horizontal and lateral slices is selected, indexed by $I$ and $J$ respectively. Second, entries are sampled within the selected slices. In both stages, one may employ either uniform random sampling, which is suitable for incoherent and well-conditioned problems, or biased sampling strategies, such as leverage score methods, when prior structural information is available. For notational convenience, let $\pi_I$ and $\pi_J$ denote the inclusion probabilities used for slice selection along the horizontal and lateral modes, respectively, and let $\pi_\cR$ and $\pi_\cC$ denote the inclusion probabilities governing entrywise sampling within the selected horizontal and lateral slices.
The resulting t-CCS sampling pattern consists of two index sets, $\Omega_{\cR}$ and $\Omega_{\cC}$, corresponding to samples collected from the selected horizontal and lateral subtensors, respectively. The combined observation is denoted by $\mathcal{P}_{\Omega_{\cR}\cup\Omega_{\cC}}
(\cY)$, or the equivalent shorthand $\cY_{\Omega_{\cR}\cup\Omega_{\cC}}$. This procedure is summarized in \Cref{proc:sampling}.

\begin{algorithm}
\caption{General t-CCS Sampling}
\label{proc:sampling}
\begin{algorithmic}[1]
\State \textbf{Input:} $\cY$: corrupted data tensor; $\pi_I,\pi_J$: inclusion probabilities for slice selection; $\pi_\cR, \pi_\cC$: inclusion probabilities for entrywise sampling within the selected slices.
\State Select slice index sets $I:= \{\, i \in[n_1]: \mathrm{Bernoulli}(\pi_I(i)) = 1 \,\}$ and $J:= \{\, j \in[n_2]: \mathrm{Bernoulli}(\pi_J(j)) = 1 \,\}$.
\State Set $\cR=[\cY]_{I,:,:}$ and
$\cC=[\cY]_{:,J,:}$.
\State Sample entry sets $\Omega_{\cR}:=\{\, (i,j,k) \in I\times[n_2]\times[n_3]: \mathrm{Bernoulli}(\pi_\cR(i,j,k)) = 1 \,\}$ and $\Omega_{\cC}:=\{\, (i,j,k) \in[n_1]\times J\times[n_3]: \mathrm{Bernoulli}(\pi_\cC(i,j,k)) = 1 \,\}$.
\State \textbf{Output:}
$[\cY]_{\Omega_{\cR}\cup\Omega_{\cC}}$,
$\Omega_{\cR},\Omega_{\cC}$, $I$, $J$: cross-concentrated observations and their associated index sets.
\end{algorithmic}
\end{algorithm}

\subsection{Tube-wise Sparsity and Fourier Support Preservation}
\label{sec:sparsity}

The outlier model is central to this work. Since the low-rank component is modeled through the t-SVD framework, a suitable sparsity assumption must be preserved under the Fourier transform along the third mode. In contrast, the conventional entrywise sparsity model does not enjoy this property: a single corrupted entry in the spatial domain generally spreads across all Fourier slices after transformation. To overcome this difficulty, we adopt a \emph{tube-wise sparsity} model, in which an entire tube is either clean or corrupted, while only a small fraction of tubes may be corrupted. More formally:

\smallskip
\begin{assumption}[Tube-wise $\alpha$-sparsity]\label{ass:sparse}
Let the tube support of $\cS$ be
$\mathrm{supp}_t(\cS):=\{(i,j): [\cS]_{i,j,:}\neq \mathbf{0}\}
\subseteq[n_1]\times[n_2]$. Then $\cS$ is tube-wise
$\alpha$-sparse if at most an $\alpha$ fraction of tubes are corrupted in
every horizontal and lateral direction, i.e.\ for all $i\in[n_1]$,
$j\in[n_2]$,
\[
\big|\{j':(i,j')\in\mathrm{supp}_t(\cS)\}\big|\le\alpha n_2,
\qquad
\big|\{i':(i',j)\in\mathrm{supp}_t(\cS)\}\big|\le\alpha n_1 .
\]
\end{assumption}

Note that no randomness is assumed on the tube support pattern of $\cS$, and the outlier magnitudes may be arbitrarily large. Thus, the model permits adversarial sparse corruptions.
The key advantage of tube-wise sparsity is that it is preserved under the Fourier transform, which underlies the t-SVD framework.

\smallskip
\begin{lemma}[Fourier support preservation]\label{lem:fourier-support}
If $\cS$ is tube-wise $\alpha$-sparse, then for every $k\in[n_3]$, the Fourier slice $\widehat{\bS}^{(k)}$ always has support contained in $\mathrm{supp}_t(\cS)$. Therefore, each $\widehat{\bS}^{(k)}$ is
$\alpha$-sparse per row and per column, with the \emph{same} support set across all $k$.
\end{lemma}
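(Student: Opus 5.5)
The plan is a direct computation: the FFT along the third mode acts tube by tube, so each Fourier coefficient at position $(i,j)$ depends only on the tube $[\cS]_{i,j,:}$.

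First, write out the definition $\widehat{\cS}=\operatorname{fft}(\cS,[],3)$ entrywise. With $\omega=e^{-2\pi \mathrm{i}/n_3}$ (or whichever sign and normalization the convention in \eqref{eq:blockdiag} uses), we have, for every $(i,j)$ and $k\in[n_3]$,
\[
[\widehat{\bS}^{(k)}]_{i,j}=\sum_{\ell=1}^{n_3}\omega^{(k-1)(\ell-1)}[\cS]_{i,j,\ell}.
\]
Equivalently, the vector $([\widehat{\bS}^{(k)}]_{i,j})_{k\in[n_3]}$ equals $\bm{F}_{n_3}[\cS]_{i,j,:}$.

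Second, suppose $(i,j)\notin\mathrm{supp}_t(\cS)$. Then $[\cS]_{i,j,:}=\mathbf{0}$, so $\bm{F}_{n_3}[\cS]_{i,j,:}=\mathbf{0}$ by linearity. Hence $[\widehat{\bS}^{(k)}]_{i,j}=0$ for all $k$. This gives $\supp(\widehat{\bS}^{(k)})\subseteq\mathrm{supp}_t(\cS)$ for every $k$, which is the first claim.

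Third, fix a row $i$. The nonzero entries of row $i$ of $\widehat{\bS}^{(k)}$ lie in $\{j':(i,j')\in\mathrm{supp}_t(\cS)\}$, and by \Cref{ass:sparse} this set has cardinality at most $\alpha n_2$. The column bound $\alpha n_1$ follows in the same way. So each $\widehat{\bS}^{(k)}$ is $\alpha$-sparse per row and per column, with every slice supported inside the common set $\mathrm{supp}_t(\cS)$.

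There is no real obstacle; the only point needing care is the phrase ``same support set''. The support of $\widehat{\bS}^{(k)}$ could be strictly smaller than $\mathrm{supp}_t(\cS)$ for some $k$, since a nonzero tube may have vanishing DFT coefficients at particular frequencies. I would therefore read the statement as saying that $\mathrm{supp}_t(\cS)$ is a common superset of all the slice supports. I would also note that equality of the supports holds on the union over $k$: because $\bm{F}_{n_3}$ is invertible, a nonzero tube has at least one nonzero Fourier coefficient, so $\bigcup_k\supp(\widehat{\bS}^{(k)})=\mathrm{supp}_t(\cS)$.
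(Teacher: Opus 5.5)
Your proposal is correct and takes the same approach as the paper: the DFT acts linearly tube by tube, so $(i,j)\notin\mathrm{supp}_t(\cS)$ forces $[\widehat{\bS}^{(k)}]_{i,j}=0$ for every $k$, and the row and column bounds then follow from Assumption~\ref{ass:sparse}. Your caveat about the phrase ``same support set'' is accurate and sharper than the paper's wording. The paper's argument likewise proves only that $\mathrm{supp}_t(\cS)$ is a common superset of the slice supports (the tube $(1,-1)$ with $n_3=2$ has a vanishing first Fourier coefficient), and equality holds only for the union over $k$, by invertibility of $\bm{F}_{n_3}$.
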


\begin{proof}
For each $(i,j)$, the Fourier tube $[\widehat{\cS}]_{i,j,:}$ is the DFT of the spatial tube $[\cS]_{i,j,:}$. If $(i,j)\notin
\mathrm{supp}_t(\cS)$, then $[\cS]_{i,j,:}=\textbf{0}$, hence
$[\widehat{\cS}]_{i,j,:}=\textbf{0}$ and in particular
$\widehat{\bS}^{(k)}_{i,j}=0$ for all $k$. Thus
$\mathrm{supp}(\widehat{\bS}^{(k)})\subseteq\mathrm{supp}_t(\cS)$ for
each $k$. Since the support of each Fourier slice is contained in the common
tube support, the per-row and per-column sparsity bounds in
Assumption~\ref{ass:sparse} hold uniformly for each Fourier slice. This finishes the proof.
\end{proof}

Lemma~\ref{lem:fourier-support} shows that tube-wise sparse corruption induces a common-support sparse perturbation across all Fourier slices. Consequently, under the block-diagonal representation \eqref{eq:blockdiag}, the robust tensor completion problem can be viewed as a collection of matrix recovery problems, one for each frequency. This observation suggests that robust CUR techniques may be applied frequency-by-frequency. However, treating each Fourier slice independently fails to fully exploit the coupling structure imposed by the t-SVD framework and may lead to information loss across frequencies. Therefore, rather than solving a sequence of independent matrix problems, we seek a tensor-native approach that preserves the tensor structure throughout the recovery process. This motivates the development of the proposed algorithm, presented in the next section.

\section{Proposed Method}\label{sec:theory}

In this section, we present the proposed Robust Iterative t-CUR ({\RITCURTC}) method for efficiently solving the robust tensor completion problem \eqref{eq:robust t-ccs}.
Recall that the subtensor $\cU$ corresponds to the overlap between the sampled horizontal and lateral subtensors $\cR$ and $\cC$. To improve both computational and memory efficiency, R-ItCUR stores only the three disjoint blocks of the sampled cross and never explicitly reconstructs the full tensor during the iterations.

We begin by establishing the notation used for the algorithm development.
Given the index sets $\Omega_\cR$, $\Omega_\cC$, $I$, and $J$ returned from \Cref{proc:sampling}, define the complementary index sets
$I^{c}=[n_{1}]\setminus I$ and $J^{c}=[n_{2}]\setminus J$. Since the lateral and horizontal subtensors share the block indexed by $I\times J\times [n_3]$, we assign these overlapping observations to the intersection and partition the observation set
$\Omega=\Omega_{\cR}\cup\Omega_{\cC}$ into three disjoint index sets:
\begin{align*}
\Omega_{\cR'} &= \Omega_\cR\cap(I\times J^c\times[n_3]),\\
\Omega_{\cC'} &= \Omega_\cC\cap(I^c\times J\times[n_3]),\\
\Omega_{\cU} &= (\Omega_\cR\cup\Omega_\cU)\cap(I\times J\times[n_3]).
\end{align*}
Correspondingly, for any tensor $\cX$, we define the associated subtensors:
\begin{align*}
\cX_{\cR'} &= [\cX]_{I, J^c,:},&
\cX_{\cC'} &= [\cX]_{I^c, J,:},&
\cX_{\cU} &= [\cX]_{I, J,:}.
\end{align*}
Since the three blocks are mutually disjoint, the objective function in \eqref{eq:robust t-ccs} decomposes as:
\begin{align} \label{eq:blockloss}
\|\cP_{\Omega_{\cR}\cup\Omega_{\cC}}(\cX+\cS-\cY)\|_{\fro}^2
&=\|\cP_{\Omega_{\cR'}}(\cX_{\cR'}+\cS_{\cR'}-\cY_{\cR'})\|_{\fro}^2 \cr
&~~+\|\cP_{\Omega_{\cC'}}(\cX_{\cC'}+\cS_{\cC'}-\cY_{\cC'})\|_{\fro}^2
+\|\cP_{\Omega_{\cU}}(\cX_{\cU}+\cS_{\cU}-\cY_{\cU})\|_{\fro}^2\cr
&=\sum_{b\in\fB} \left\|\cP_{\Omega_b}(\cX_b+\cS_b-\cY_b)\right\|_{\fro}^2,
\end{align}
where we write
\[
    \fB=\{\cR',\cC',\cU\}
\]
for ease of notation.
Hence, certain optimization operations, such as thresholding and gradient descent steps, can be performed independently on each block.
Next, we will describe the resulting optimization procedure, which is summarized in \Cref{alg:ritcurtc}.

\begin{algorithm}[ht]
\caption{Robust Iterative t-CUR (\RITCURTC) for Robust t-CCS Tensor Completion}
\label{alg:ritcurtc}
\small
\begin{algorithmic}[1]
\State \textbf{Input:} $[\cY]_{\Omega_{\cR}\cup\Omega_{\cC}}$,
$\Omega_{\cR},\Omega_{\cC}$, $I$, $J$: cross-concentrated observations and their associated index sets; $r$: target rank; $\alpha$: contamination fraction; $\{\eta_b\}$: step sizes.
\ForAll{$b\in\mathfrak B$}
  \State Initialize $\cX_b^0$ by
  \eqref{eq:robust-initialization}--\eqref{eq:init X_0}
\EndFor

\For{$\ell=1,2,\ldots,L$}
  \ForAll{$b\in\mathfrak B$}
    \State Compute the robust parameter $\sigma_{\alpha,b}^\ell$ by \eqref{eq:welsch-scale}
    \State $\cS_b^\ell\gets
    \fW_{\sigma_{\alpha,b}^\ell}(\cP_{\Omega_b} (\cY_b-\cX_b^{\ell-1}))$
    \State $\widetilde{\cX}_b^\ell
    \gets \cX_b^{\ell-1} - \eta_b\, \cP_{\Omega_b} (\cX_b^{\ell-1}+\cS_b^\ell -\cY_b)$
  \EndFor
  \State $\cX_{\cU}^\ell =: \cW^\ell\tprod\boldsymbol{\Theta}^\ell\tprod(\cV^\ell)^\top
    \gets \textrm{t-SVD}_r(\widetilde{\cX}_{\cU}^\ell)$
  \State $\cX_{\cR'}^\ell \gets \cW^\ell\tprod(\cW^\ell)^\top\tprod \widetilde{\cX}_{\cR'}^\ell$
  \State $\cX_{\cC'}^\ell \gets
    \widetilde{\cX}_{\cC'}^\ell\tprod\cV^\ell\tprod(\cV^\ell)^\top$
\EndFor
\State $\cX_{\cR}^L\gets$ Concatenation of $\cX_{\cU}^L$  and $\cX_{\cR'}^L$
\State $\cX_{\cC}^L\gets$ Concatenation of $\cX_{\cU}^L$ and  $\cX_{\cC'}^L$
\State \textbf{Output:} $\cX_L=\cX_{\cC}^L\tprod(\cX_{\cU}^L)^{\tdag}\tprod\cX_{\cR}^L$: cross blocks of low-tubal-rank estimate.
\end{algorithmic}
\end{algorithm}

\subsection{Outlier Detection}\label{subsection:outlierDet}
Our optimization framework is compatible with many classic outlier detection modules developed for RPCA, such as hard-thresholding \cite{netrapalli2014nonconvex,cai2019accaltproj}, soft-thresholding \cite{cai2021lrpca,cai2026lrmc}, and sparsification \cite{yi2016fast,tong2021accelerating}. In this work, however, we employ the smooth Welsch correction function for outlier detection~\cite{holland1977robust,black1996unification}.
The primary motivation is to reduce sensitivity to parameter selection and avoid the unstable support estimates that may arise in classic methods when the residuals exhibit ambiguous or clustered patterns. The Welsch correction is a smooth, bounded, and nonconvex robust loss that gradually suppresses the influence of large residuals while retaining sensitivity to small residuals. Consequently, it provides a stable mechanism for separating sparse outliers from the low-rank component and is generally less sensitive to tuning parameters. Moreover, its smooth weighting scheme mitigates repeated support switching during the iterations, which is particularly beneficial under t-CCS sampling, where large residuals may arise from both sparse corruption and incomplete observations in the early stages of optimization.

For each $b\in\fB$, let $\cX_b^{\ell-1}$ be the estimate of the corresponding clean cross block at the previous iteration and denote the observed residual
\begin{equation*}\cE_b^\ell = \cP_{\Omega_b} \left(\cY_b-\cX_b^{\ell-1}\right).
\end{equation*}
The Welsch correction is defined as:
\begin{equation*}\fW_{\sigma}(\cE)=
    \left[
       \mathbf 1-\exp\left(-\frac{\cE\odot\cE}{2\sigma^2}\right)
    \right]\odot\cE,
\end{equation*}
where $\sigma>0$ is a robustness parameter, and the exponential $\exp(\cdot)$ and products $\odot$ are entrywise.
The blockwise robust
correction of \RITCURTC{} at $\ell$-th iteration is then given as:
\begin{equation} \label{eq:block-welsch-correction}
    \cS_b^\ell = \fW_{\sigma_{\alpha,b}^\ell}(\cE_b^\ell), \qquad \forall\, b\in\mathfrak B.
\end{equation}
Specifically, the robustness parameters are chosen adaptively as:
\begin{equation} \label{eq:welsch-scale}
    \sigma_{\alpha,b}^\ell =\max\left\{
        Q_{1-\alpha}
        \left(
          \left\{
          |[\cE_b^\ell]_{i,j,k}|:
          (i,j,k)\in\Omega_b
          \right\}
        \right),
        \sigma_{\min}
    \right\},
\end{equation}
where $Q_q$ denotes the empirical $q$-quantile and $\sigma_{\min}>0$ is a small constant used to prevent division by zero.
When the contamination fraction $\alpha=0$, \RITCURTC{} sets $\cS_b^\ell=\bm{0}$ directly for all $b\in\fB$, reducing the method to the non-robust t-CUR completion algorithm proposed in \cite{su2024tccs}.

\smallskip
\begin{remark}
The Welsch correction is generally not exactly sparse. Consequently, $\cS_b^\ell$ should be interpreted as an outlier-robust correction term rather than a hard estimate of the outlier support.
This design avoids unstable support switching during the iterations and reduces sensitivity to parameter tuning.
If an explicit support estimate is desired, it may be obtained through a separate thresholding or post-processing step after convergence.
\end{remark}

\subsection{Projected Blockwise Descent} \label{sec:projected GD}

We next update the low-tubal-rank component via projected blockwise gradient descent. Recall from \eqref{eq:blockloss} that the objective function decomposes into three disjoint blocks. Consequently, the gradient update for $\cX$ can be performed independently on each block, allowing for parallel implementation:
\begin{equation} \label{eq:observed-block-update}
    \widetilde{\cX}_b^\ell
    = \cX_b^{\ell-1} - \eta_b\, \cP_{\Omega_b} \left(\cX_b^{\ell-1}+\cS_b^\ell -\cY_b\right),
    \qquad \forall b\in\fB,
\end{equation}
where $\eta_b$ denotes the step size associated with block $b$.
The updated blocks generally do not satisfy the desired low-tubal-rank structure. A projection step is therefore required. However, forming the full tensor and computing a global t-SVD would negate the computational and memory advantages of the t-CCS framework. Instead, following the philosophy of t-CUR methods, all computations are performed directly on the sampled cross.

We first compute a rank-$r$ truncated t-SVD of the intersection block:
\begin{equation} \label{eq:current-core-tsvd}
    \cX_{\cU}^\ell
    =\textrm{t-SVD}_r(\widetilde{\cX}_{\cU}^\ell)
    = \cW^\ell\tprod\boldsymbol{\Theta}^\ell\tprod(\cV^\ell)^\top,
\end{equation}
where $\cW^\ell\in\mathbb R^{|I|\times r\times n_3}$ and $\V^\ell\in\mathbb R^{|J|\times r\times n_3}$ have orthonormal tubal columns.
The resulting factors define the dominant tubal column and row subspaces of the overlap block. We then project the exterior blocks onto these subspaces:
\begin{align} \label{eq:project blocks}
    \cX_{\cR'}^\ell = \cW^\ell\tprod(\cW^\ell)^\top\tprod \widetilde{\cX}_{\cR'}^\ell \quad\textnormal{ and }\quad
    \cX_{\cC'}^\ell =
    \widetilde{\cX}_{\cC'}^\ell\tprod\cV^\ell\tprod(\cV^\ell)^\top.
\end{align}
Conceptually, the updated low-rank tensor can be expressed through the t-CUR reconstruction:
\begin{align} \label{eq:low rank update}
\cX^\ell=\cX_{\cC}^\ell*(\cX_{\cU}^\ell)^\dagger*\cX_{\cR}^\ell,
\end{align}
where $\cX_{\cC}^\ell$ and $\cX_{\cR}^\ell$ denote the lateral and horizontal sampled subtensors obtained by concatenating $\cX_{\cU}^\ell$ with $\cX_{\cC'}^\ell$ and $\cX_{\cR'}^\ell$, respectively.
Importantly, \eqref{eq:low rank update} serves only as a conceptual characterization of the recovered tensor. The tensor is never explicitly formed during the iterations. Instead, \RITCURTC{} stores and updates only the three cross blocks, thereby preserving the memory and computational advantages of the t-CUR representation.

\subsection{Initialization and Stopping Rule}
All three block variables are initialized to zero. Consequently, the initial residual on each observed block is simply $\cP_{\Omega_b}\cY_b$, which is used to compute the initial Welsch correction and the corresponding outlier estimate $\cS_b^0$ according to \eqref{eq:block-welsch-correction}.
We then define
\begin{equation}
    \cP_{\Omega_b}(\widetilde{\cX}_b^0) =\cP_{\Omega_b}(\cY_b-\cS_b^0)
    \quad \textnormal{ and }\quad
    \cP_{\Omega_b^c}(\widetilde{\cX}_b^0)=\bm{0}, \qquad \forall b\in\fB.
\label{eq:robust-initialization}
\end{equation}
In other words, the observed entries are initialized using the outlier-corrected observations, while all unobserved entries are set to zero. The resulting blocks are then projected onto the low-tubal-rank manifold using the same blockwise projection procedure described in \eqref{eq:current-core-tsvd}--\eqref{eq:project blocks}:
\begin{gather} \label{eq:init X_0}
    \cX_{\cU}^0
    =\textnormal{t-SVD}_r(\widetilde{\cX}_{\cU}^0)
    = \cW^0\tprod\boldsymbol{\Theta}^0\tprod(\cV^0)^\top,\cr
    \cX_{\cR'}^0 = \cW^0\tprod(\cW^0)^\top\tprod \widetilde{\cX}_{\cR'}^0 \quad\textnormal{ and }\quad
    \cX_{\cC'}^0 =
    \widetilde{\cX}_{\cC'}^0\tprod\cV^0\tprod(\cV^0)^\top.
\end{gather}

For the stopping criterion, the algorithm may monitor the relative change of the block variables:
\begin{equation*}
    d_\ell=
    \frac{\sum_{b\in\fB}\|\cX_b^\ell-\cX_b^{\ell-1}\|_{\fro}^2}
       {\sum_{b\in\fB}\|\cX_b^{\ell-1}\|_{\fro}^2},
\end{equation*}
or the normalized observed residual:
\begin{equation*}e_\ell=
    \frac{\sum_{b\in\fB}\|\cP_{\Omega_b}(\cX_b^\ell+\cS_b^\ell-\cY_b)\|_{\fro}^2}
    {\sum_{b\in\fB}\|\cP_{\Omega_b}\cY_b\|_{\fro}^2}.
\end{equation*}
The algorithm terminates when either $d_\ell$ or $e_\ell$ falls below a prescribed tolerance, or when a maximum number of iterations is reached.

Importantly, both the initialization procedure and the stopping criteria rely solely on the three stored blocks. Therefore, their evaluation does not require reconstructing the full tensor and will not incur additional memory or computational complexities.

 \section{Numerical Experiments}
\label{sec:exp}

We evaluate the proposed \RITCURTC\ method using synthetic
low-tubal-rank tensors, a cardiac MRI volume, and a three-dimensional
seismic data set.  The synthetic experiments examine the residual
decay and reconstruction accuracy of the method under different tubal
ranks and corruption magnitudes.  The real-data experiments investigate
its robustness to increasing corruption levels and assess the effect of
the observation geometry on reconstruction quality.

\smallskip
\noindent\textbf{Compared methods.}
We compare \RITCURTC\ with the nonrobust t-CCS method \ITCURTC\
\cite{su2024tccs} and the robust full-tensor iterative
hard-thresholding method \RTIHT\ \cite{jiang2019robust}.  Comparing
\RITCURTC\ with \ITCURTC\ under identical t-CCS observations measures
the improvement obtained by incorporating a robust correction into the
t-CUR completion framework.  Applying \RTIHT\ directly to the same
t-CCS observations examines whether a generic robust tensor-completion
method can handle the structured sampling pattern without explicitly
exploiting the sampled cross.  We further apply \RTIHT\ under uniform
entry sampling to illustrate the influence of the observation geometry
and to provide a conventional uniform-sampling benchmark for
\RITCURTC.

Because their t-CUR representations rely on selected horizontal and
lateral subtensors, \RITCURTC\ and \ITCURTC\ are applied only under
t-CCS.  In each comparison between t-CCS and uniform sampling, the
uniform mask contains the same realized number of observations as the
corresponding t-CCS mask.  The two sampling schemes also use the same
number of corrupted entries and the same collection of corruption
amplitudes.  Within each trial, all methods are evaluated using the
same ground-truth tensor and random seed.

\smallskip
\noindent\textbf{Evaluation metrics.}
We use different accuracy measures for the synthetic and real-data
experiments.  For the synthetic tensors, reconstruction accuracy is
measured by the relative error (RelErr) between the recovered tensor ${\cX}_{\textrm{output}}\in\mathbb{R}^{n_1\times n_2\times n_3}$ and its ground truth $\cX^\star\in\mathbb{R}^{n_1\times n_2\times n_3}$
\begin{equation}
    \operatorname{RelErr}
    =
    \frac{
        \|{\cX}_{\textrm{output}}-\cX^\star\|_\fro
    }{
        \|\cX^\star\|_\fro
    }.
    \label{eq:synthetic-relative-error}
\end{equation}
For the MRI and seismic data, reconstruction quality is measured by the
peak signal-to-noise ratio (PSNR)
\begin{equation}
    \operatorname{PSNR}
    =
    10\log_{10}
    \left(
        \frac{
            n_1n_2n_3
            \|\cX^\star\|_\infty^2
        }{
            \|{\cX}_{\textrm{output}}-\cX^\star\|_\fro^2
        }
    \right),
    \label{eq:psnr}
\end{equation}
where \(\|\cX^\star\|_\infty\) denotes the largest absolute entry
of $\cX^\star$.

For the synthetic experiments, we additionally evaluate how well the
estimated corruption magnitudes distinguish corrupted observations from
clean ones. Since the Welsch correction is generally dense, it does not
directly provide a sparse support estimate. For evaluation only, we
therefore define $\widetilde{\Gamma}$ as the set of the
$|\Gamma^\star|$ largest-magnitude entries of
$\widetilde{\cS}_{\Omega}$, where $\Gamma^\star$ denotes the true
corruption support.
Support recovery is measured by the $F_1$ score, defined as the
harmonic mean of precision and recall:
\begin{equation}
    F_1
    =
    \frac{
        2\,\operatorname{precision}\,
        \operatorname{recall}
    }{
        \operatorname{precision}
        +
        \operatorname{recall}
    },
    \label{eq:f1-score}
\end{equation}
where
\[
    \operatorname{precision}
    =
    \frac{\mathrm{TP}}{\mathrm{TP}+\mathrm{FP}},
    \qquad
    \operatorname{recall}
    =
    \frac{\mathrm{TP}}{\mathrm{TP}+\mathrm{FN}}.
\]
Here, $\mathrm{TP}$, $\mathrm{FP}$, and $\mathrm{FN}$ denote the
numbers of true positives, false positives, and false negatives,
respectively, obtained by comparing the estimated support
$\widetilde{\Gamma}$ with the ground-truth support $\Gamma^\star$.
The cardinality $|\Gamma^\star|$ is used only for this diagnostic
post-processing step and is not supplied to the reconstruction
algorithm.

\smallskip
\noindent\textbf{Implementation details.}
All experiments are implemented in Python using double-precision
arithmetic. The t-product, t-SVD, and tubal-rank projections are
computed through the discrete Fourier transform along the third tensor
mode. The experiments were performed on a laptop equipped with an
11th-generation Intel Core i7-11800H processor at 2.30~GHz and
16~GB of RAM.

\subsection{Synthetic Experiments}
\label{subsec:synthetic-experiments}

We first examine \RITCURTC\ on synthetic tensors with a prescribed
low-tubal-rank structure.  Following the matrix experiments in
\cite{cai2024rccs}, we generate
\(
    \bm{\mathcal A}\in\mathbb R^{n_1\times r\times n_3},
    \bm{\mathcal B}\in\mathbb R^{r\times n_2\times n_3},
\)
whose entries are independently drawn from the standard Gaussian
distribution, and define
\(
    \cX^\star=\bm{\mathcal A}*\bm{\mathcal B}.
\)
Thus,
\(\operatorname{rank}_t(\cX^\star)\leq r\).
To make the corruption levels comparable across tensor dimensions and
tubal ranks, we normalize the tensor to have unit root-mean-square
amplitude:
\(
    \frac{\|\cX^\star\|_\fro^2}
         {n_1n_2n_3}
    =1.
\)

\smallskip
\noindent\textbf{Sampling and corruption models.}
We generate a tensor cross-concentrated sampling mask with cross
fraction \(\delta\) and within-cross sampling rate \(p\).  When the same
cross fraction is used in the first two modes, the nominal overall
observation fraction is
\(
    \rho(p,\delta) =  2\delta p-(\delta p)^2.
\)

Let \(\Omega\) denote the resulting observation set.  We select
\(\lceil\alpha|\Omega|\rceil\) observed entries uniformly at random and
denote the corruption support by
\(\Gamma^\star\subseteq\Omega\).  On this support, the corruption
entries are generated independently according to
\begin{equation}
    [\cS^\star]_{ijk}
    \sim
    \operatorname{Unif}[-c\mu_X,c\mu_X],
    \qquad
    \mu_X
    =
    \frac{\|\cX^\star\|_1}{n_1n_2n_3},
    \qquad
    (i,j,k)\in\Gamma^\star.
    \label{eq:synthetic-corruption-model}
\end{equation}
The observed tensor is therefore
\(
    \cY_\Omega
    =
    \mathcal P_\Omega
    \left(
        \cX^\star+\cS^\star
    \right).
\)

\smallskip \noindent\textbf{Algorithmic settings.}
The same Welsch-based implementation of \RITCURTC\ is used in all
synthetic and real-data experiments.  For the synthetic experiments,
the three block step sizes are set to
\(
    \eta_{\mathrm R}
    =
    \eta_{\mathrm C}
    =
    \eta_{\mathrm U}
    =1.
\)
The Welsch scale is estimated separately for each observed block using
the adaptive quantile rule described in Section~\ref{subsection:outlierDet}.
The algorithm is run for at least \(20\) and at most \(100\) iterations.
After the first \(20\) iterations, it is terminated once the relative
change between successive iterates remains below \(5\times10^{-5}\) for
five consecutive iterations.

\smallskip \noindent\textbf{Observed-residual decay.}
We consider tensors of size \(120\times120\times8\) and fix the corruption rate   $\alpha=0.20$,   and the sampling parameters $p=\delta=0.40$,  which gives the nominal observation fraction
\(\rho=0.2944\).  We conduct two sets of experiments.  First, we fix the
corruption-magnitude factor at \(c=10\) and vary the tubal rank over
\(
    r\in\{2,3,5\}.
\)
Second, we fix \(r=3\) and vary the corruption-magnitude factor over
\(    c\in\{10,50,200\}.
\)
Each configuration is repeated over ten independent trials.

At iteration \(\ell\), we record the squared normalized residual on the
observed entries:
\begin{equation}
    e_{\ell}
    =
    \frac{
        \|
            \mathcal P_\Omega
            (
                 {\cX}^{\,\ell}
                +
                 {\cS}^{\,\ell}
                -
                \cY_\Omega
            )
        \|_\fro^2
    }{
        \|\cY_\Omega\|_\fro^2
    },
    \qquad
    {\cX}^{\,\ell}
    =
    \cC^{\ell}*
    (\cU^{\ell})^\dagger*
    \cR^{\ell}.
    \label{eq:synthetic-observed-residual}
\end{equation}
This quantity can be computed entirely from the observed data and is
used only as a convergence diagnostic.  In
Fig.~\ref{fig:synthetic-convergence}, the solid curves represent the
median trajectories over the ten trials, while the shaded regions
represent the corresponding interquartile ranges.  When a trial
terminates before iteration \(100\), its final residual is carried
forward in computing the aggregate trajectory.

\begin{figure}[ht]
    \centering
    \begin{subfigure}[t]{0.485\textwidth}
        \centering
        \includegraphics[width=\linewidth]{
            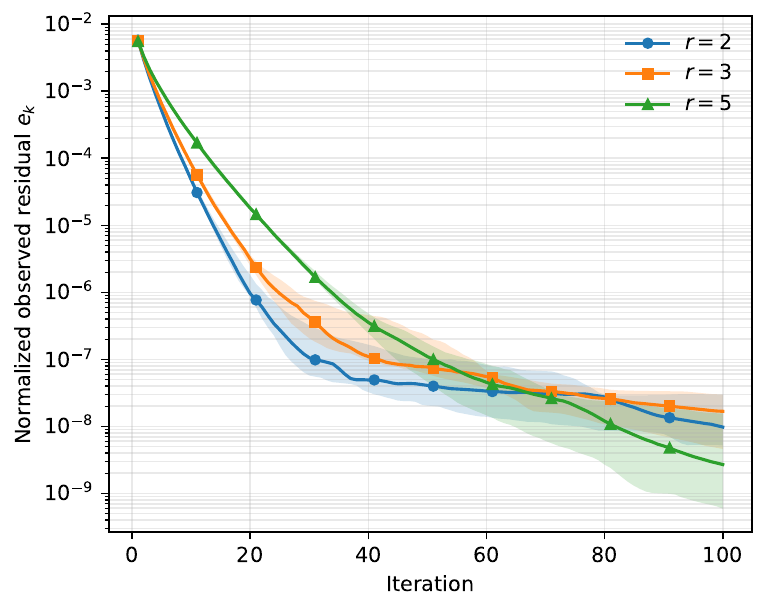}
        \caption{Fixed \(c=10\) and varying tubal rank.}
        \label{fig:synthetic-convergence-rank}
    \end{subfigure}
    \hfill
    \begin{subfigure}[t]{0.485\textwidth}
        \centering
        \includegraphics[width=\linewidth]{
            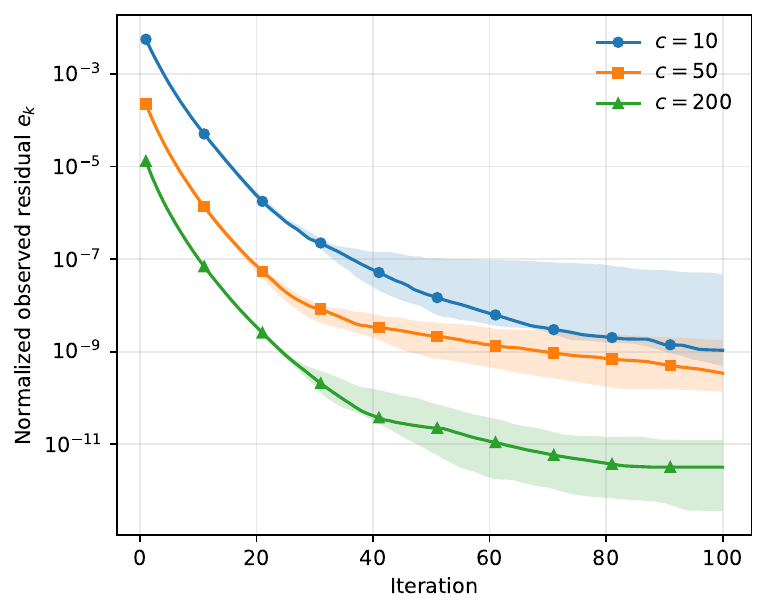}
        \caption{Fixed \(r=3\) and varying corruption magnitude.}
        \label{fig:synthetic-convergence-magnitude}
    \end{subfigure}
    \caption{
        Observed-residual decay of the Welsch-based \RITCURTC\ method
        on synthetic low-tubal-rank tensors.  The curves and shaded
        regions show the medians and interquartile ranges, respectively,
        over ten independent trials.
    }
    \label{fig:synthetic-convergence}
\end{figure}

\begin{table}[ht]
    \centering
    \caption{
        Synthetic reconstruction and corruption-separation results.
        Each row summarizes ten independent trials under
        \(\alpha=0.20\), \(p=\delta=0.40\), and tensor size
        \(120\times120\times8\).
    }
    \label{tab:syn_welsch_con}
    \small

\begin{tabular}{cccccc}
\toprule
Experiment & Parameter & Median RelErr & Iter. & Median Top-$k$ $F_1$ & Min. Top-$k$ $F_1$ \\
\midrule
\multirow{3}{*}{Corruption}
& $c=10$  & $3.44\times10^{-2}$ & 100 & 0.9802 & 0.9710 \\
& $c=50$  & $2.46\times10^{-2}$ & 100 & 0.9948 & 0.9928 \\
& $c=200$ & $2.20\times10^{-2}$ & 93  & 0.9982 & 0.9973 \\
\midrule
\multirow{3}{*}{Tubal rank}
& $r=2$ & $3.82\times10^{-2}$ & 100 & 0.9799 & 0.9687 \\
& $r=3$ & $4.87\times10^{-2}$ & 100 & 0.9797 & 0.9680 \\
& $r=5$ & $3.07\times10^{-2}$ & 100 & 0.9819 & 0.9742 \\
\bottomrule
\end{tabular}
\end{table}

Figure~\ref{fig:synthetic-convergence} and
Table~\ref{tab:syn_welsch_con} show that \RITCURTC\ consistently reduces
the observed residual for all tested tubal ranks and corruption
magnitudes.  The final relative errors remain between approximately
\(2\times10^{-2}\) and \(5\times10^{-2}\), indicating that the observed
data are fitted increasingly well even though the reconstruction error
does not vanish.  Moreover, the results for \(r\in\{2,3,5\}\) exhibit no
clear monotone dependence on the tubal rank over the tested range.  Since
most trials reach the iteration limit, the curves should be interpreted
as evidence of stable residual reduction rather than strict convergence
under the prescribed stopping criterion.

A clearer trend is observed as the corruption magnitude increases.
When \(c\) increases from \(10\) to \(200\), the median relative error
decreases from \(3.44\times10^{-2}\) to \(2.20\times10^{-2}\), while the
median top-\(|\Gamma^\star|\) \(F_1\)-score increases from \(0.9802\) to
\(0.9982\).  At a fixed corruption fraction, larger outliers are more
easily distinguished from the low-tubal-rank component, allowing the
Welsch correction to suppress them more effectively.  Thus, the improved
recovery at larger \(c\) reflects increased separation between the clean
signal and the corruptions.

\subsection{Real-Data Experiments}
\label{subsec:real-data-experiments}

We evaluate \RITCURTC\ on two real datasets: a cardiac MRI volume
from the \emph{Task02\_Heart} dataset of the Medical Segmentation
Decathlon\footnote{The Medical Segmentation Decathlon data are available at
\url{http://medicaldecathlon.com/dataaws}.}
\cite{simpson2019medical,antonelli2022medical}, and the F3 Demo
three-dimensional seismic dataset provided by Terranubis\footnote{The F3 Demo seismic dataset is available at
\url{https://terranubis.com/datainfo/F3-Demo-2020}.}.
Because the Fourier transform underlying the t-product is applied along
the third mode, the ordering of the tensor modes forms part of the
reconstruction model.  Table~\ref{tab:datasets} summarizes the tensor
representation and target rank used for each dataset.

\begin{table}[ht]
    \centering
    \caption{Real datasets used in the experiments.}
    \label{tab:datasets}
    \small
    \begin{tabular}{lccc}
        \toprule
        Dataset
        & Tensor size
        & Mode ordering
        & Target rank \(r\) \\
        \midrule
        MRI
        & \(320\times320\times110\)
        & \(x\)-coordinate \(\times\) \(y\)-coordinate
          \(\times\) axial slice
        & \(35\) \\
        Seismic
        & \(51\times191\times146\)
        & time sample \(\times\) inline \(\times\) crossline
        & \(3\) \\
        \bottomrule
    \end{tabular}
\end{table}

\subsubsection{MRI Reconstruction}
\label{subsec:mri-experiments}

We examine the proposed method on an MRI data of size
\(    320\times320\times110.
\)
The volume is globally normalized to the interval $[0,1]$, and all
t-products and t-SVDs use the discrete Fourier transform along the
third mode. The target tubal rank is fixed at $r=35$.

\smallskip \noindent\textbf{Experimental setup.}
For the quantitative experiment, we follow the nonzero-support MRI
sampling protocol used in \cite{su2024tccs}. For each
trial, index sets
\(
    I\subset[n_1],
    J\subset[n_2]
\)
are selected according to the cross fraction $\delta$. Among the entries
in the resulting horizontal and lateral subtensors, only locations for
which
\(    [\cX^\star]_{ijk}\neq0
\)
are eligible for observation. At most
\(
    \left\lceil
        \rho n_1n_2n_3
    \right\rceil
\)
eligible entries are sampled, where the nominal observation fraction is
fixed at $\rho=0.30$.

Because the selected cross may contain fewer than
$0.3 n_1n_2n_3$ nonzero entries, the realized observation fraction can
be smaller than the nominal value.
The selected values of $\delta$ and the corresponding realized observation rates are listed in Table~\ref{tab:obsrate}.
\begin{table}[ht]
\centering
\caption{Realized observation rates (mean $\pm$ standard deviation over three reporting seeds).}
\label{tab:obsrate}
\begin{tabular}{c|ccc}
\toprule
Target $\delta$ & 0.23 & 0.25 & 0.27 \\
\midrule
Realized observation rate
& $0.2688 \pm 0.0035$
& $0.2861 \pm 0.0053$
& $0.2980 \pm 0.0021$ \\
\bottomrule
\end{tabular}
\end{table}
In addition, we choose
\(
    \alpha\in\{0,0.05,0.10,0.15\}.
\)
Given an observation set $\Omega$, exactly
\(
    \operatorname{round}(\alpha|\Omega|)
\)
observed entries are chosen uniformly at random as the corruption
support. The nonzero corruption values are sampled independently from
\begin{equation}
    [\cS^\star]_{ijk}
    \sim
    \operatorname{Unif}[-c\mu_X,c\mu_X],
    \qquad
    c=3,
    \qquad
    \mu_X
    =
    \frac{1}{n_1n_2n_3}
    \|\cX^\star\|_1.
    \label{eq:mri-corruption-model}
\end{equation}
Within each paired experiment, the t-CCS and uniform-sampling models
use the same number of observed entries, the same number of corrupted
entries, and the same collection of corruption amplitudes.

We compare the Welsch \RITCURTC\ and nonrobust \ITCURTC\ under
the same t-CCS mask and corrupted observations. We also apply the same
robust full-tensor IHT implementation under t-CCS and matched uniform
entry sampling. The \RITCURTC\ and \ITCURTC\ methods are run for
$20$ iterations, whereas the two \RTIHT\ baselines are run for
$40$ iterations. All entries in Table~\ref{tab:mri-results} are means
and standard deviations over the three random seeds.

\begin{table*}[ht]
    \centering
    \caption{
        MRI reconstruction PSNR in dB under the nonzero-support t-CCS
        protocol and matched uniform sampling. Entries are mean
        $\pm$ standard deviation over three random seeds. The nominal
        observation fraction is $\rho=0.30$, the target tubal rank is
        $r=35$, and the corruption-magnitude factor is $c=3$. The best
        mean PSNR in each row is shown in bold.
    }
    \label{tab:mri-results}
    \small
    \setlength{\tabcolsep}{5pt}
    \renewcommand{\arraystretch}{1.08}
    \begin{tabular}{cccccc}
        \toprule
        $\rho$
        & $\alpha$
        & \makecell{\RITCURTC\\(t-CCS)}
        & \makecell{\ITCURTC\\(t-CCS)}
        & \makecell{\RTIHT\\(t-CCS)}
        & \makecell{\RTIHT\\(uniform)} \\
        \midrule

        \multirow{4}{*}{$0.23$}
        & $0.00$
        & $\mathbf{30.30\pm0.60}$
        & $\mathbf{30.30\pm0.60}$
        & $19.80\pm0.19$
        & $21.94\pm0.06$ \\

        & $0.05$
        & $\mathbf{29.32\pm0.53}$
        & $28.78\pm0.39$
        & $19.54\pm0.17$
        & $21.52\pm0.05$ \\

        & $0.10$
        & $\mathbf{28.58\pm0.44}$
        & $27.63\pm0.27$
        & $19.48\pm0.16$
        & $21.32\pm0.05$ \\

        & $0.15$
        & $\mathbf{27.81\pm0.35}$
        & $26.75\pm0.21$
        & $19.44\pm0.15$
        & $21.18\pm0.05$ \\

        \midrule

        \multirow{4}{*}{$0.25$}
        & $0.00$
        & $\mathbf{31.48\pm0.76}$
        & $\mathbf{31.48\pm0.76}$
        & $19.95\pm0.03$
        & $22.27\pm0.08$ \\

        & $0.05$
        & $\mathbf{30.16\pm0.75}$
        & $29.51\pm0.54$
        & $19.69\pm0.04$
        & $21.80\pm0.07$ \\

        & $0.10$
        & $\mathbf{29.39\pm0.68}$
        & $28.23\pm0.44$
        & $19.61\pm0.05$
        & $21.58\pm0.07$ \\

        & $0.15$
        & $\mathbf{28.58\pm0.62}$
        & $27.28\pm0.36$
        & $19.57\pm0.05$
        & $21.43\pm0.07$ \\

        \midrule

        \multirow{4}{*}{$0.27$}
        & $0.00$
        & $\mathbf{31.58\pm0.95}$
        & $\mathbf{31.58\pm0.95}$
        & $20.62\pm0.47$
        & $22.48\pm0.04$ \\

        & $0.05$
        & $\mathbf{30.31\pm0.79}$
        & $29.86\pm0.39$
        & $20.30\pm0.41$
        & $21.99\pm0.04$ \\

        & $0.10$
        & $\mathbf{29.57\pm0.64}$
        & $28.58\pm0.29$
        & $20.19\pm0.39$
        & $21.76\pm0.03$ \\

        & $0.15$
        & $\mathbf{28.83\pm0.50}$
        & $27.58\pm0.23$
        & $20.13\pm0.37$
        & $21.59\pm0.03$ \\

        \bottomrule
    \end{tabular}
\end{table*}

\smallskip \noindent\textbf{Quantitative results.}
Table~\ref{tab:mri-results} shows that \RITCURTC\ matches \ITCURTC\
in the clean case and consistently achieves higher PSNR when
corruptions are present. Moreover, the performance gap generally
increases with the corruption fraction, demonstrating that the Welsch
correction improves robustness to sparse gross errors without degrading
clean-data reconstruction.

The \RTIHT\ baseline performs better under uniform sampling than under
t-CCS, indicating that it does not exploit the cross-concentrated sampling structure as effectively as \RITCURTC. Among the methods using
the same t-CCS observations, \RITCURTC\ provides the best reconstruction
quality throughout the corrupted cases.

\smallskip \noindent\textbf{Value-independent representative reconstruction.}
To additionally illustrate performance under value-independent
sampling, we consider a representative realization generated using the
\texttt{all\_entries} protocol. In this experiment, every location in
the selected tensor cross is eligible for observation, independently
of its ground-truth intensity. The parameters are $\rho=0.30$, $\delta=0.25$, $\alpha=0.10$, and $r=35$,
with random seed $20260731$. The resulting full-volume PSNR values are
\[
\begin{aligned}
    \RITCURTC\text{ (t-CCS)}
        &:\ 24.3560\ \mathrm{dB},\\
    \ITCURTC\text{ (t-CCS)}
        &:\ 23.7057\ \mathrm{dB},\\
    \RIHT\text{ (t-CCS)}
        &:\ 19.2252\ \mathrm{dB},\\
    \RIHT\text{ (uniform)}
        &:\ 21.8052\ \mathrm{dB}.
\end{aligned}
\]
The corresponding common-test PSNR values, evaluated only on entries
unobserved under both sampling models, are $23.1636$, $22.5163$,
$17.7483$, and $20.2146$ dB, respectively.
Figure~\ref{fig:mri-representative} displays slices $27$, $54$, and
$82$. For each row, all reconstruction methods use the same intensity
range computed from the corresponding ground-truth slice.

\begin{figure}[!htbp]
    \centering
    \setlength{\tabcolsep}{1.5pt}
    \renewcommand{\arraystretch}{0.35}

    \begin{tabular}{@{}ccccc@{}}
      {Ground truth}
        &
        \makecell{\RITCURTC\\ (t-CCS)}
        &
        \makecell{\ITCURTC\\ (t-CCS)}
        &
        \makecell{\RTIHT\\ (t-CCS)}
        &
        \makecell{\RTIHT\\(uniform)}
        \\

        \includegraphics[width=0.192\textwidth]{
            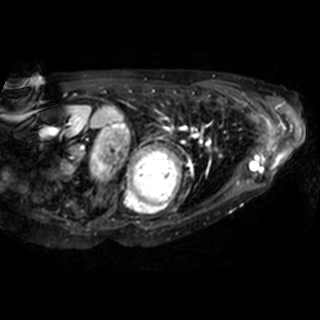}
        &
        \includegraphics[width=0.192\textwidth]{
            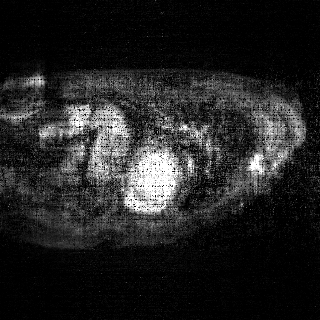}
        &
        \includegraphics[width=0.192\textwidth]{
            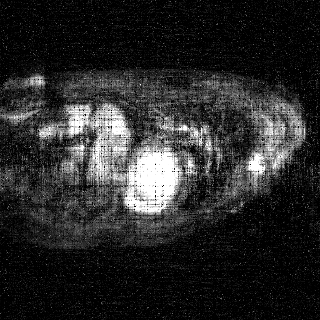}
        &
        \includegraphics[width=0.192\textwidth]{
            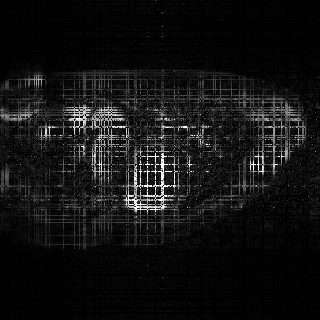}
        &
        \includegraphics[width=0.192\textwidth]{
            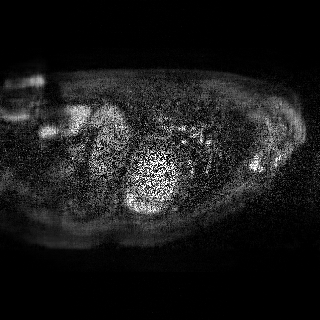}
        \\

        \includegraphics[width=0.192\textwidth]{
            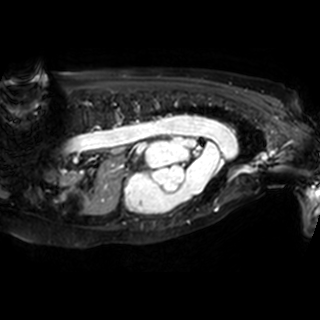}
        &
        \includegraphics[width=0.192\textwidth]{
            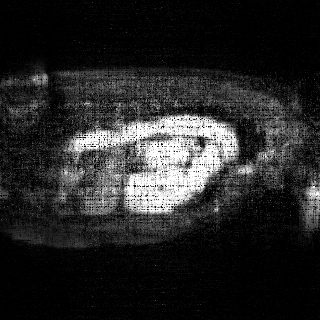}
        &
        \includegraphics[width=0.192\textwidth]{
            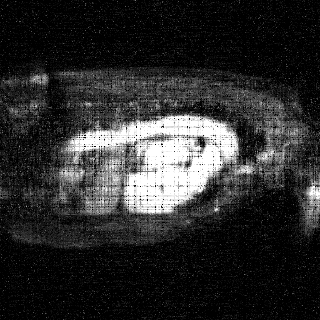}
        &
        \includegraphics[width=0.192\textwidth]{
            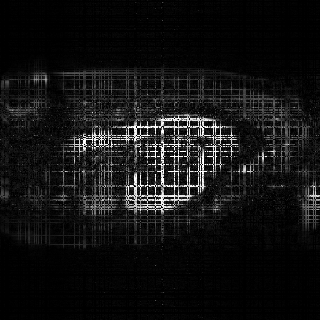}
        &
        \includegraphics[width=0.192\textwidth]{
            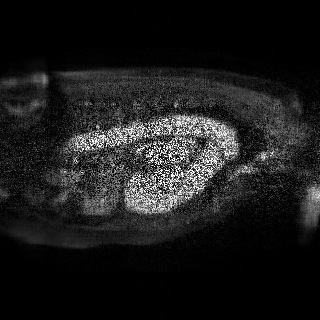}
        \\

        \includegraphics[width=0.192\textwidth]{
            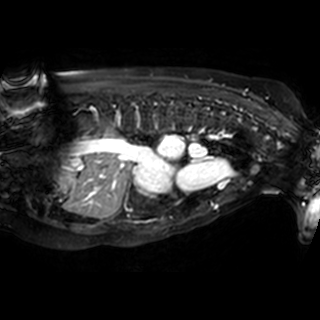}
        &
        \includegraphics[width=0.192\textwidth]{
            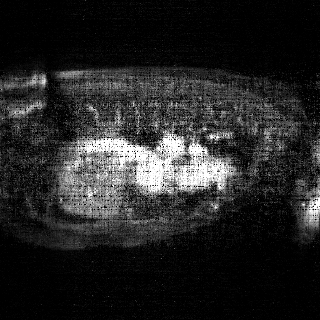}
        &
        \includegraphics[width=0.192\textwidth]{
            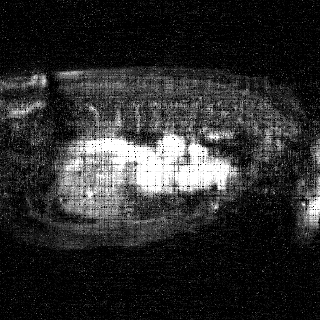}
        &
        \includegraphics[width=0.192\textwidth]{
            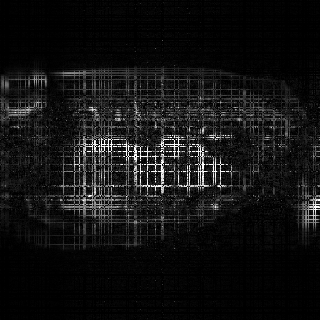}
        &
        \includegraphics[width=0.192\textwidth]{
            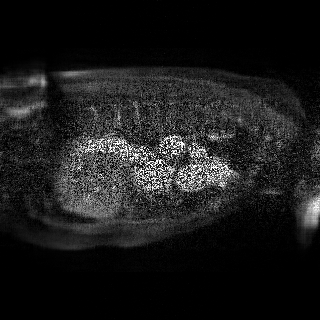}
    \end{tabular}

    \caption{
        Representative MRI reconstructions under value-independent
        sampling with $\rho=0.30$, $\delta=0.25$, $\alpha=0.10$,
        and $r=35$. From top to bottom, the rows show axial slices
        $27$, $54$, and $82$. All panels in the same row use a common
        display range determined from the ground-truth slice. This
        representative experiment uses the \texttt{all\_entries}
        protocol and is therefore distinct from the nonzero-support
        quantitative experiment reported in Table~\ref{tab:mri-results}.
    }
    \label{fig:mri-representative}
\end{figure}

\subsubsection{Seismic Data Recovery}
\label{subsec:seismic-experiment}

For the seismic tensor described above, we set the target tubal rank to
$r=3$ and fix the t-CCS cross-size parameter at $\delta=0.17$. After
integer rounding, the selected index sets satisfy
\(
    |I|=9,
    |J|=32.
\)
We consider the overall observation fractions
\(
    \rho\in\{0.20,0.25,0.30\}
\)
and outlier fractions
\(
    \alpha\in\{0,0.05,0.10,0.15\}.
\)

For each target observation fraction $\rho$, we choose the sampling
probability $p$ so that the expected fraction of observed tensor entries
equals $\rho$. Let
\(
    \delta_1=\frac{|I|}{51},
    \delta_2=\frac{|J|}{191}
\)
denote the relative sizes of the selected horizontal and lateral
subtensors. Since these subtensors overlap on
$\cX^\star(I,J,:)$, their observation masks are combined by
logical union, and hence
\begin{equation}
    \rho
    =
    p\delta_1+p\delta_2-p^2\delta_1\delta_2.
    \label{eq:seismic-sampling-rate}
\end{equation}
Solving this equation gives
\(
    p\approx0.614,  0.779,  0.950
\)
for $\rho=0.20$, $0.25$, and $0.30$, respectively.
For each realization, $\lceil\alpha|\Omega|\rceil$ observed entries are
selected uniformly at random as the outlier support. Since the seismic
tensor is normalized to unit peak magnitude, each selected entry is
perturbed by a randomly signed value of magnitude $3$. For the
uniform-sampling baseline, we use the same realized number of
observations, the same number of outliers, and the same collection of
outlier magnitudes as in the corresponding t-CCS experiment.
 We compare \RITCURTC\ and nonrobust \ITCURTC\ under the same t-CCS
observations. We also apply robust full-tensor IHT under t-CCS and under
matched uniform entry sampling. Each reported value is averaged over
three independent random seeds.

\begin{table}[ht]
    \centering
    \caption{
        Seismic reconstruction PSNR in dB for different observation
        fractions $\rho$ and outlier fractions $\alpha$. Each entry is
        averaged over three independent random seeds. The uniform-sampling
        baseline uses the same realized observation and corruption
        cardinalities as the corresponding t-CCS experiment.
    }
    \label{tab:seismic-results}
    \small
    \setlength{\tabcolsep}{5pt}
    \renewcommand{\arraystretch}{1.08}
    \begin{tabular}{cccccc}
        \toprule
        $\rho$
        & $\alpha$
        & \makecell{\RITCURTC\\(t-CCS)}
        & \makecell{\ITCURTC\\(t-CCS)}
        & \makecell{\RTIHT\\(t-CCS)}
        & \makecell{\RTIHT\\(uniform)} \\
        \midrule

        \multirow{4}{*}{$0.20$}
        & $0.00$
        & $\mathbf{27.90}$
        & $\mathbf{27.90}$
        & $2.54$
        & $27.72$ \\

        & $0.05$
        & $\mathbf{27.38}$
        & $8.69$
        & $2.49$
        & $26.75$ \\

        & $0.10$
        & $\mathbf{27.28}$
        & $5.80$
        & $2.39$
        & $25.34$ \\

        & $0.15$
        & $\mathbf{26.19}$
        & $4.11$
        & $2.25$
        & $23.98$ \\

        \midrule

        \multirow{4}{*}{$0.25$}
        & $0.00$
        & $29.18$
        & $29.18$
        & $2.54$
        & $\mathbf{29.79}$ \\

        & $0.05$
        & $28.49$
        & $10.57$
        & $2.51$
        & $\mathbf{29.57}$ \\

        & $0.10$
        & $28.24$
        & $7.72$
        & $2.43$
        & $\mathbf{29.25}$ \\

        & $0.15$
        & $27.93$
        & $6.02$
        & $2.29$
        & $\mathbf{29.14}$ \\

        \midrule

        \multirow{4}{*}{$0.30$}
        & $0.00$
        & $30.11$
        & $30.11$
        & $2.54$
        & $\mathbf{30.32}$ \\

        & $0.05$
        & $29.16$
        & $11.93$
        & $2.53$
        & $\mathbf{30.20}$ \\

        & $0.10$
        & $28.87$
        & $9.12$
        & $2.43$
        & $\mathbf{29.94}$ \\

        & $0.15$
        & $28.66$
        & $7.42$
        & $2.33$
        & $\mathbf{29.89}$ \\

        \bottomrule
    \end{tabular}
\end{table}

\smallskip \noindent\textbf{Quantitative results.}
Table~\ref{tab:seismic-results} shows that \RITCURTC\ agrees with
\ITCURTC\ in the clean case and is substantially more robust when
outliers are present. In particular, the PSNR of \RITCURTC\ remains
stable as $\alpha$ increases, whereas the performance of nonrobust
\ITCURTC\ deteriorates sharply. The \RTIHT\ baseline performs considerably better under uniform
sampling than under t-CCS, indicating that robustness alone does not
adequately address the cross-concentrated observation geometry.
At $\rho=0.20$, \RITCURTC\ achieves the highest PSNR for every tested
outlier fraction. At the two higher observation fractions, uniform
\RTIHT\ gives slightly higher PSNR, while \RITCURTC\ remains competitive
under the structured t-CCS acquisition model.

\smallskip \noindent\textbf{Representative reconstruction.}
We further present a representative reconstruction with
\(
    \rho=0.25
\)
and
\(
    \alpha=0.10
\),
using random seed $1$. The displayed time-sample indices are
$16$, $27$, and $38$, selected from the high-content portion of the
seismic cube according to the standard deviation of each slice. Within
each row, all methods use the same intensity range determined from the
corresponding ground-truth slice. As shown in
Figure~\ref{fig:seismic-reconstruction}, \RITCURTC\ preserves the main
reflective structures and spatial continuity of the seismic data despite
the sparse gross corruptions. In contrast, nonrobust \ITCURTC\ exhibits
substantial reconstruction artifacts, while uniform-sampling \RTIHT\
produces visually accurate reconstructions but uses a different,
unstructured observation pattern. These results are consistent with the
quantitative comparison in Table~\ref{tab:seismic-results}.

\begin{figure}[ht]
    \centering
    \setlength{\tabcolsep}{2pt}
    \renewcommand{\arraystretch}{0.4}

    \begin{tabular}{@{}cccc@{}}
       {Ground truth}
        &
        \makecell{\RITCURTC\\(t-CCS)}
        &
        \makecell{{\ITCURTC}\\{(t-CCS)}}
        &
        \makecell{{\RTIHT}\\{(uniform)}} \\

        \includegraphics[width=0.24\textwidth]{
            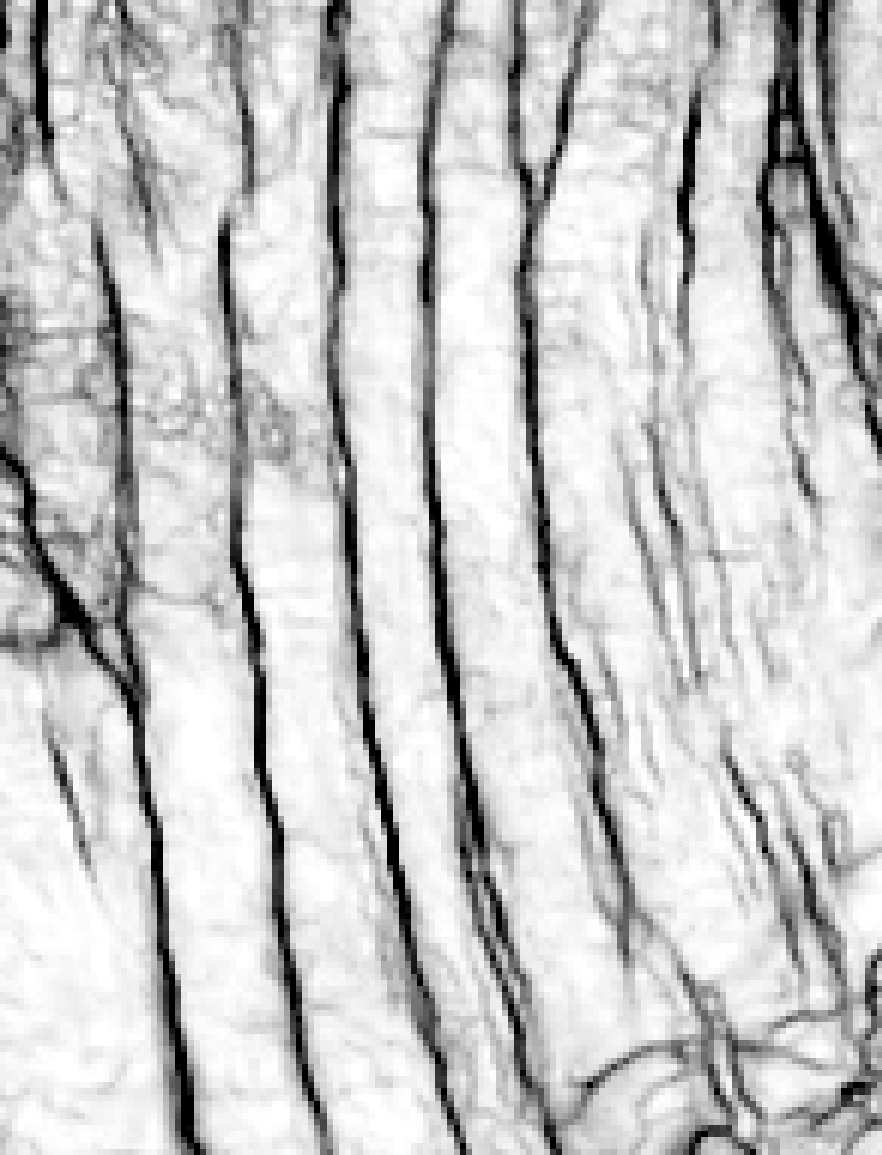}
        &
        \includegraphics[width=0.24\textwidth]{
            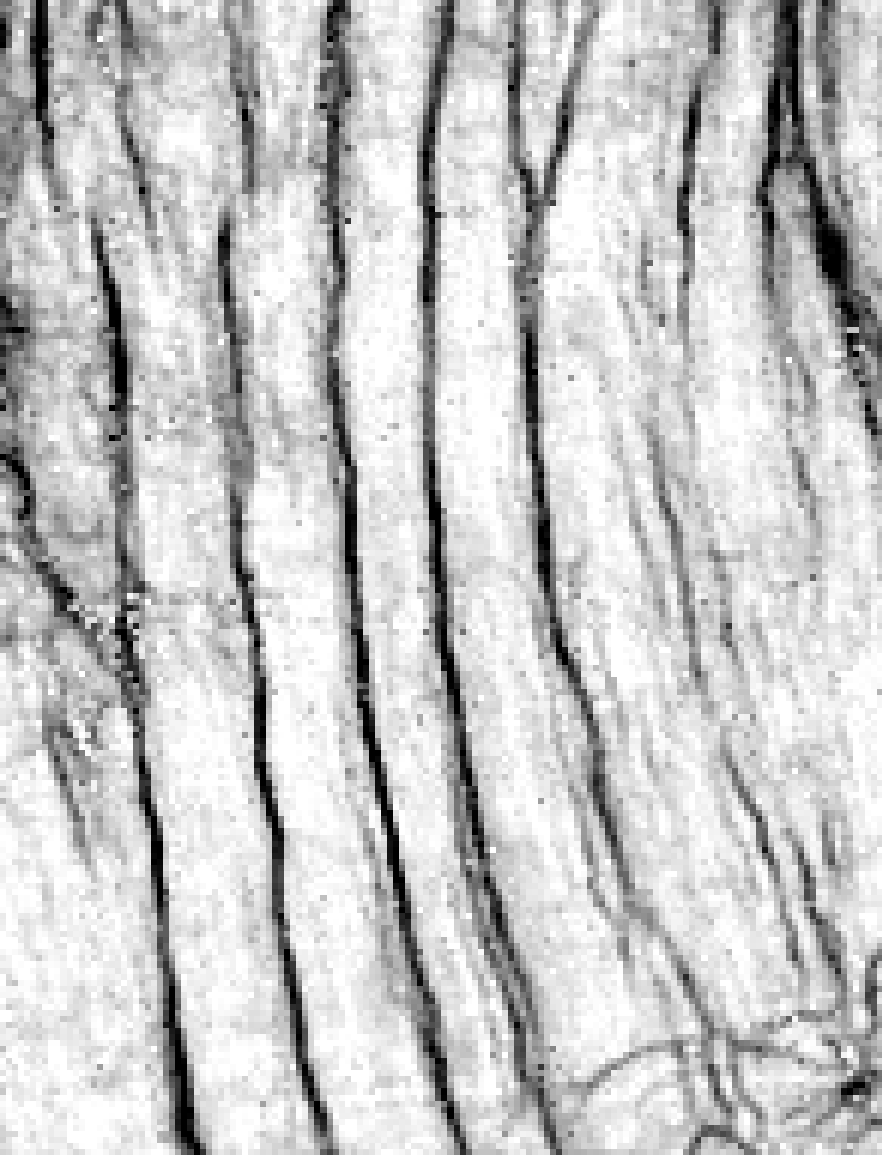}
        &
        \includegraphics[width=0.24\textwidth]{
            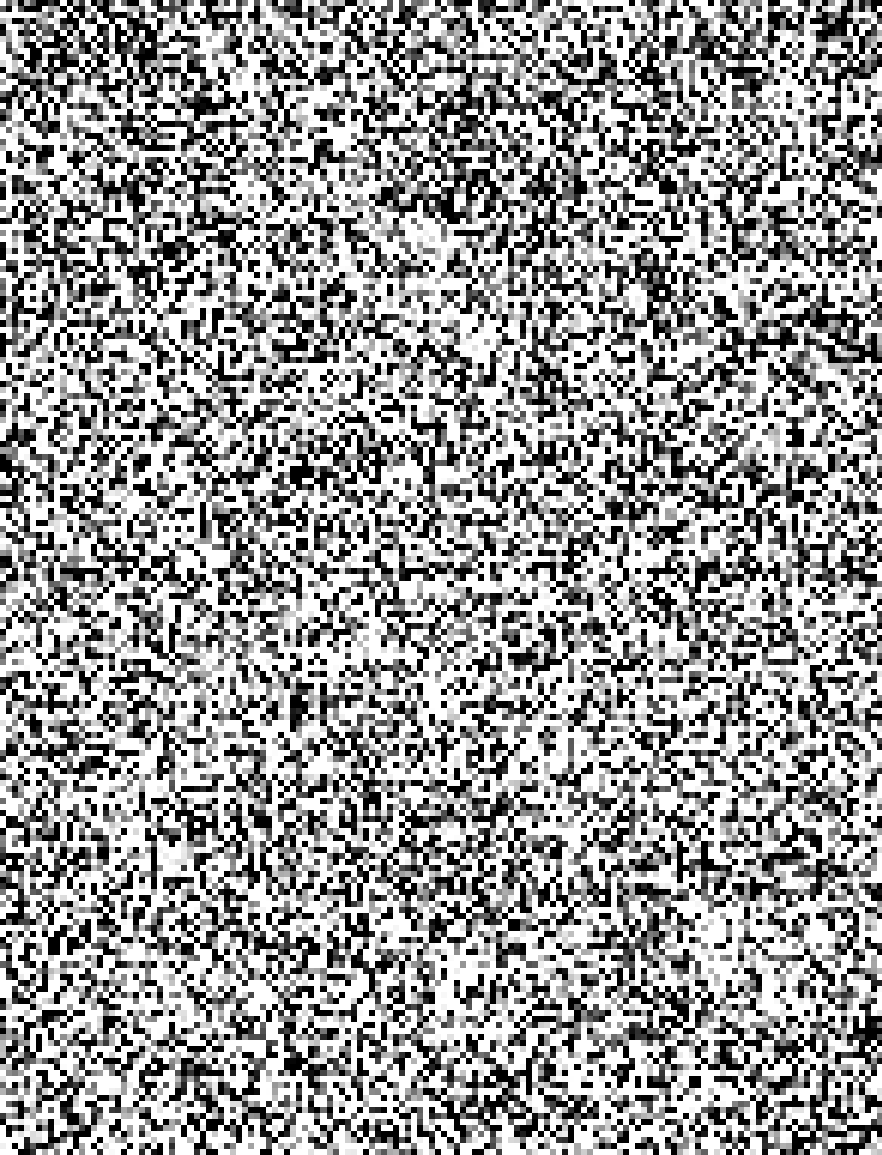}
        &
        \includegraphics[width=0.24\textwidth]{
            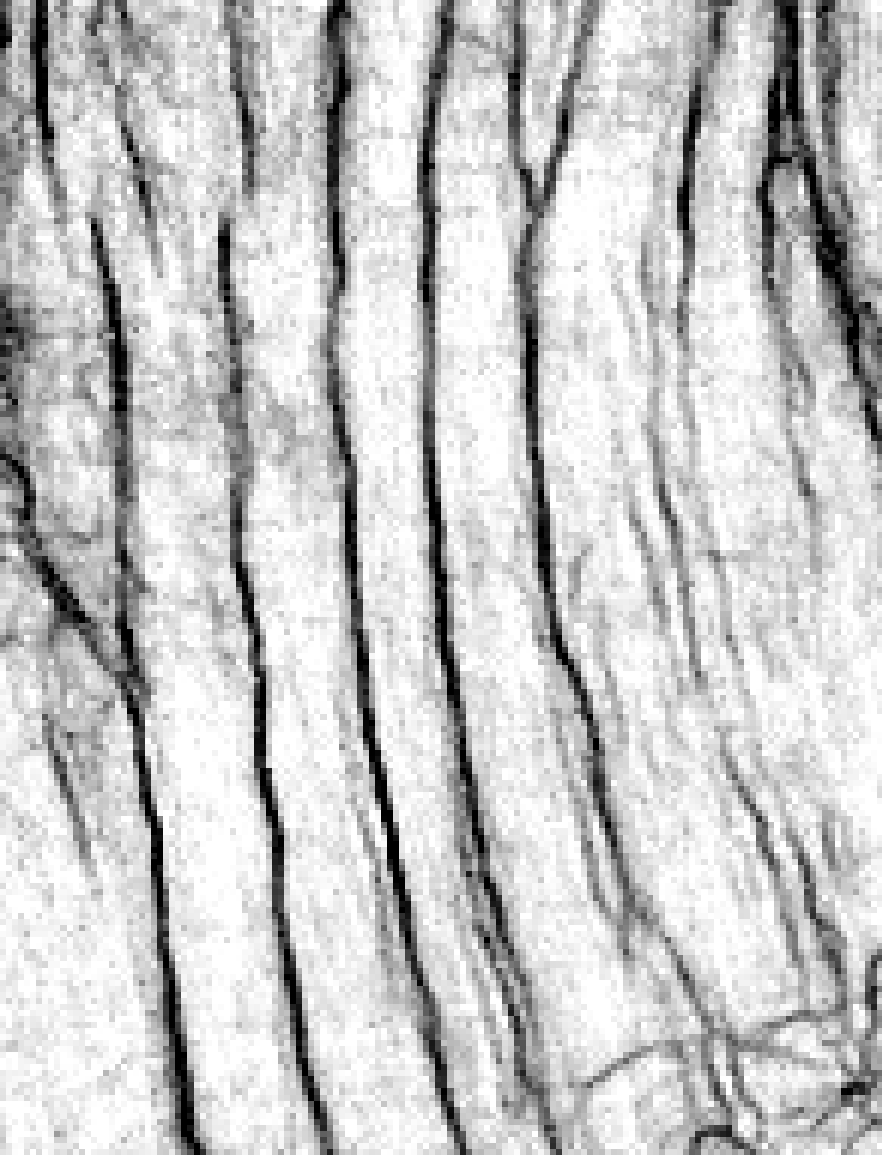} \\

        \includegraphics[width=0.24\textwidth]{
            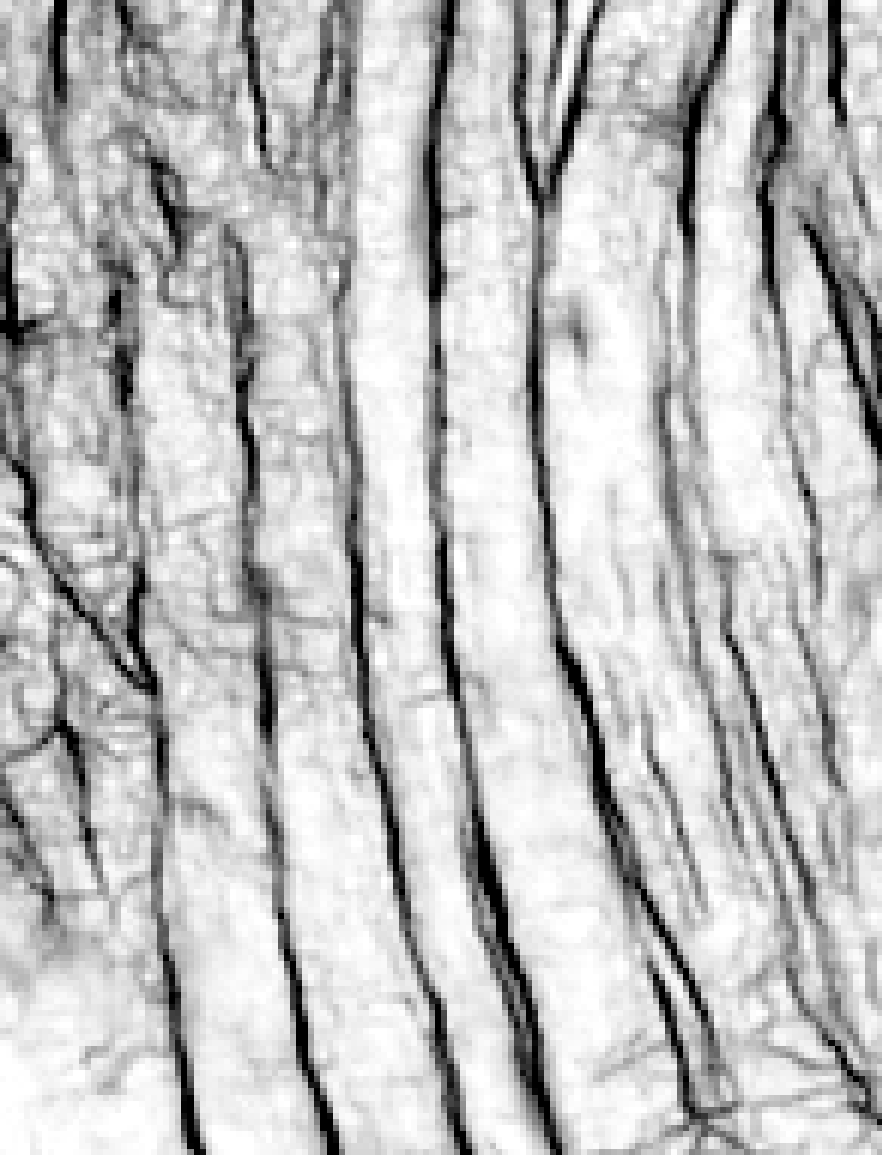}
        &
        \includegraphics[width=0.24\textwidth]{
            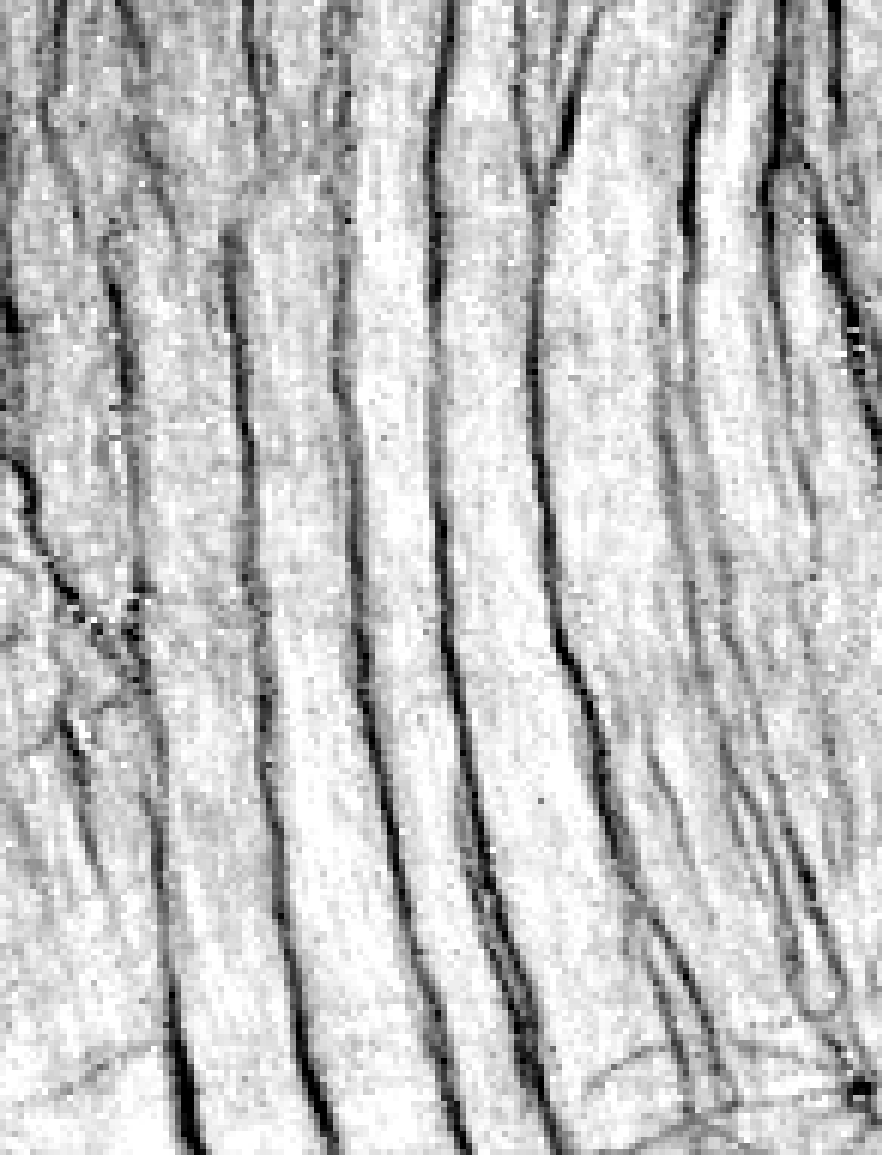}
        &
        \includegraphics[width=0.24\textwidth]{
            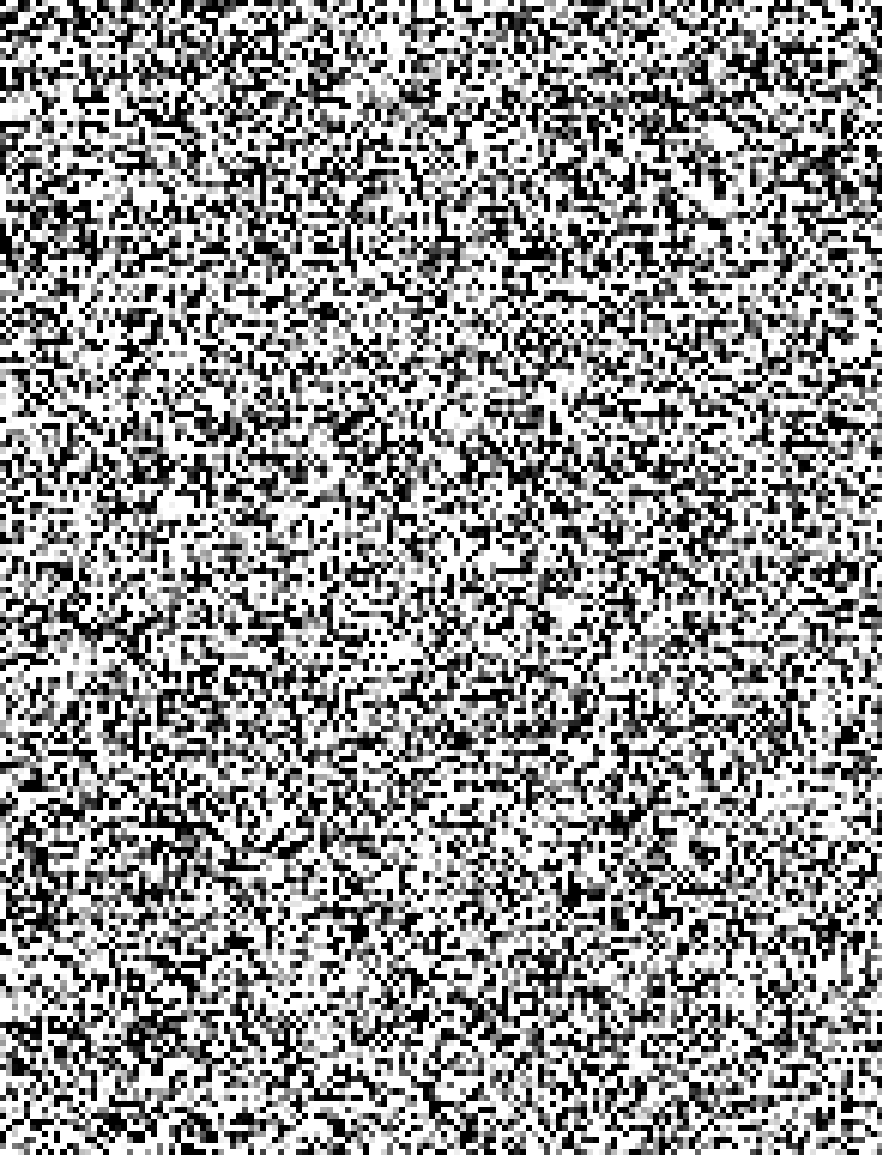}
        &
        \includegraphics[width=0.24\textwidth]{
            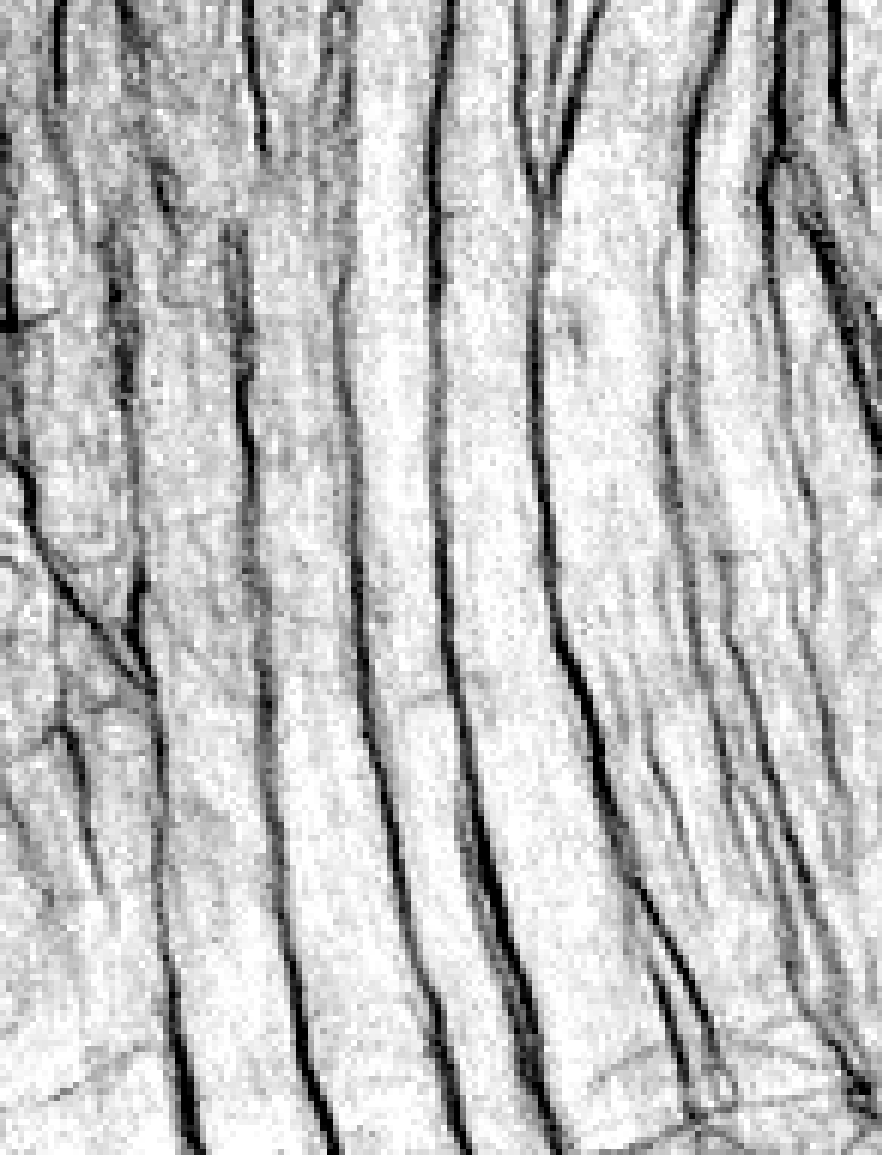} \\

        \includegraphics[width=0.24\textwidth]{
            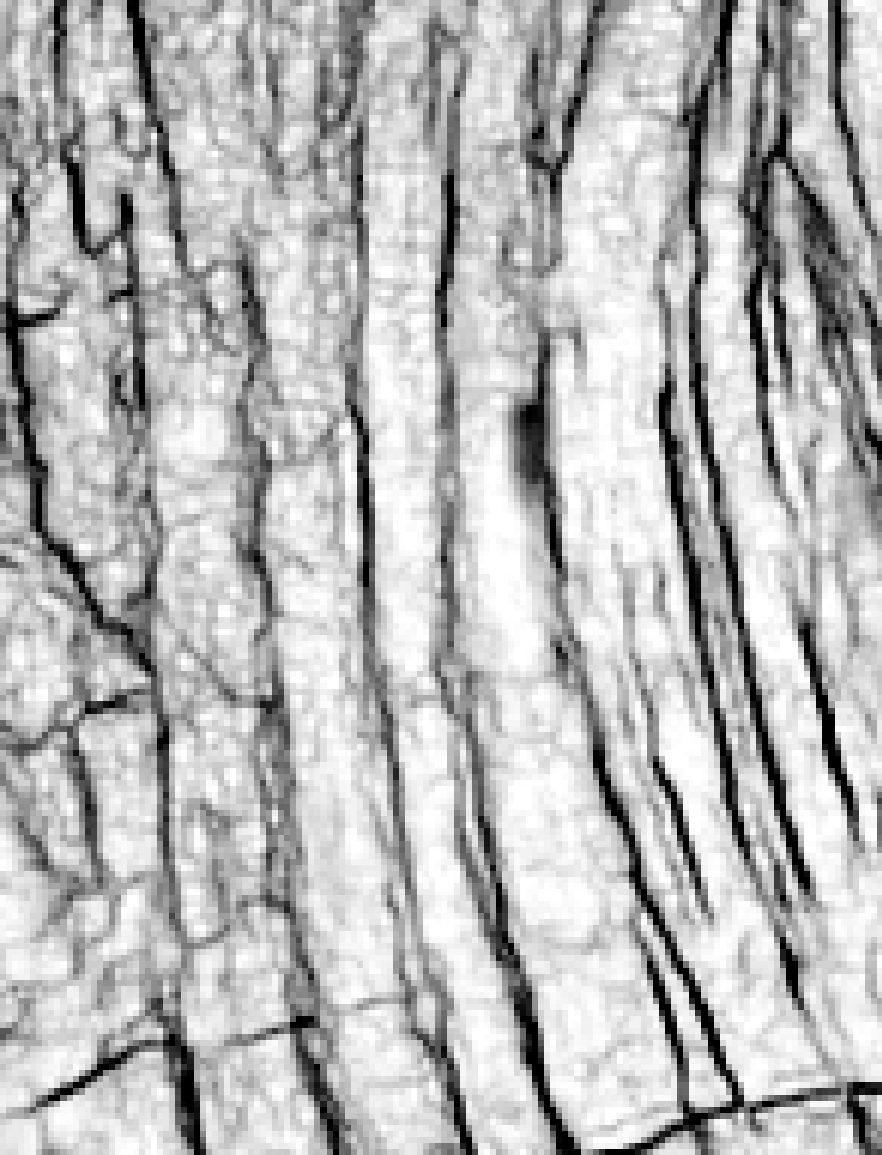}
        &
        \includegraphics[width=0.24\textwidth]{
            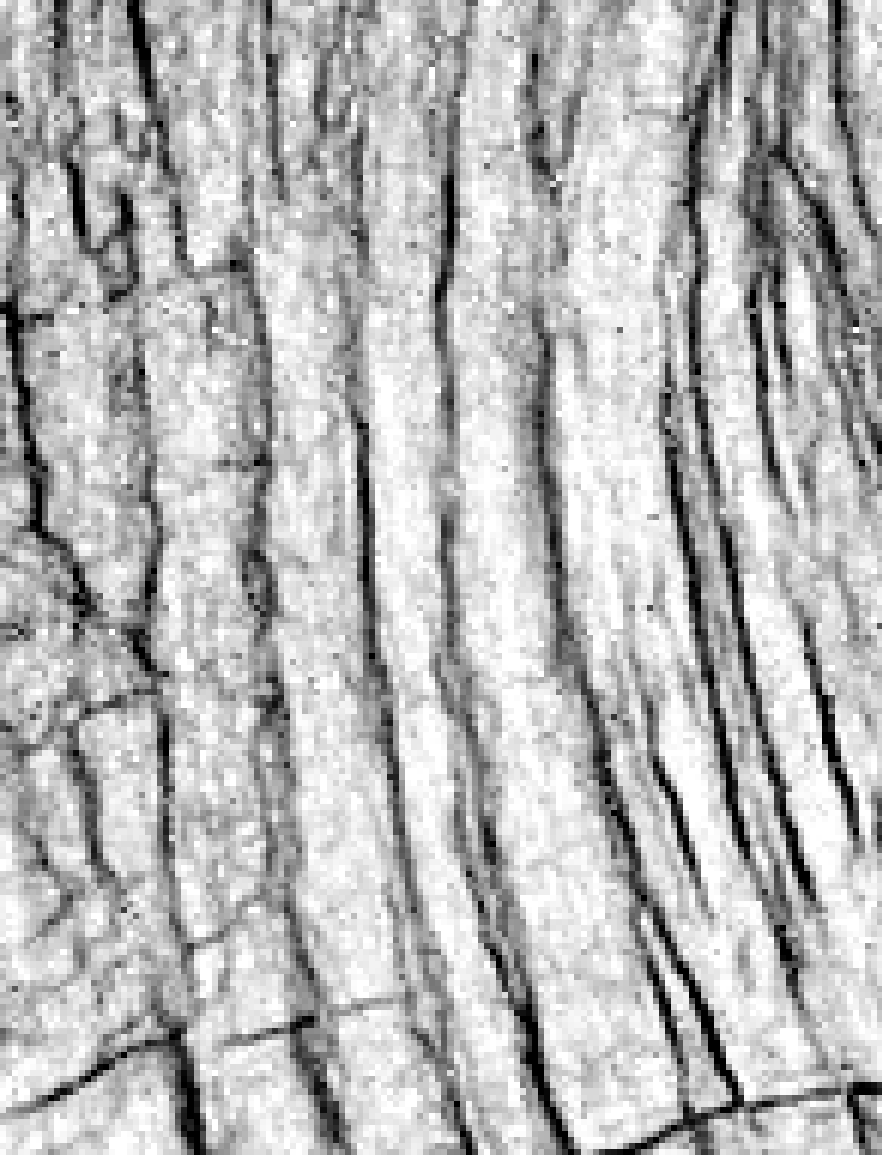}
        &
        \includegraphics[width=0.24\textwidth]{
            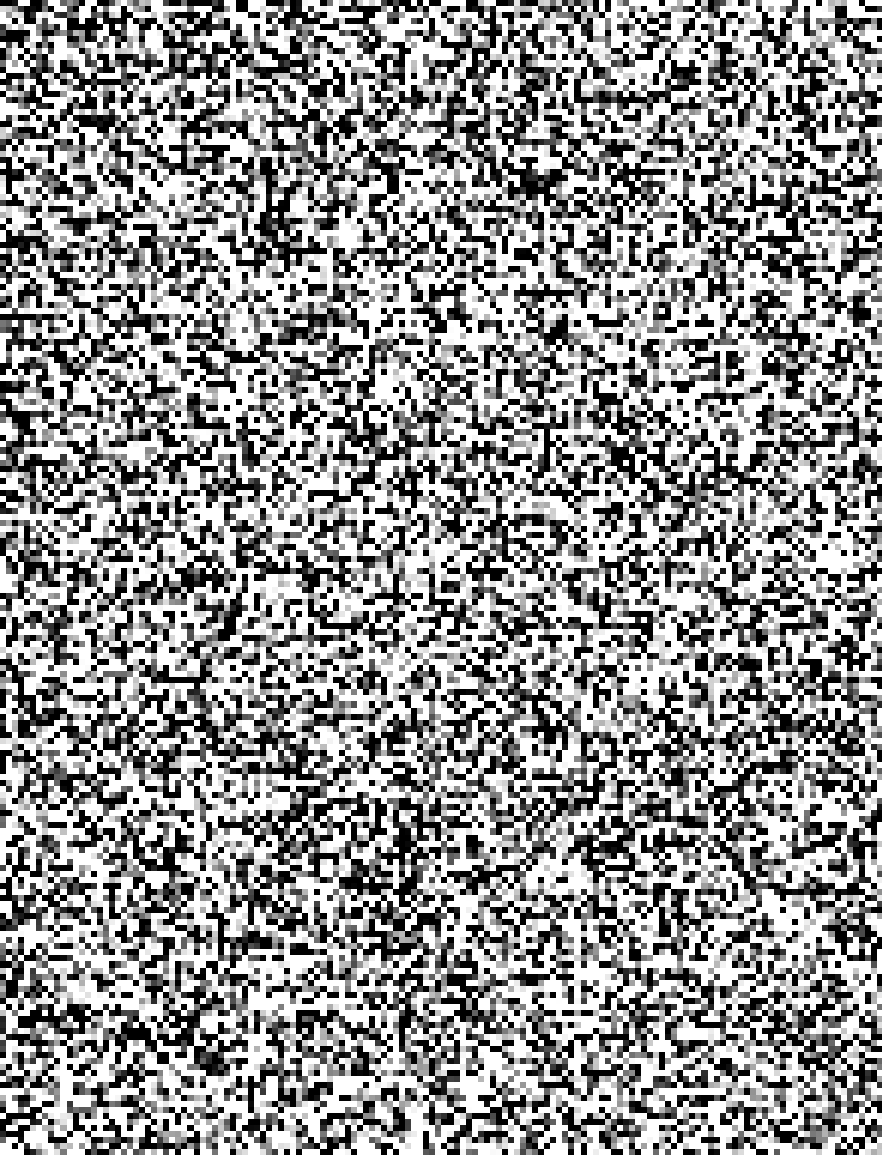}
        &
        \includegraphics[width=0.24\textwidth]{
            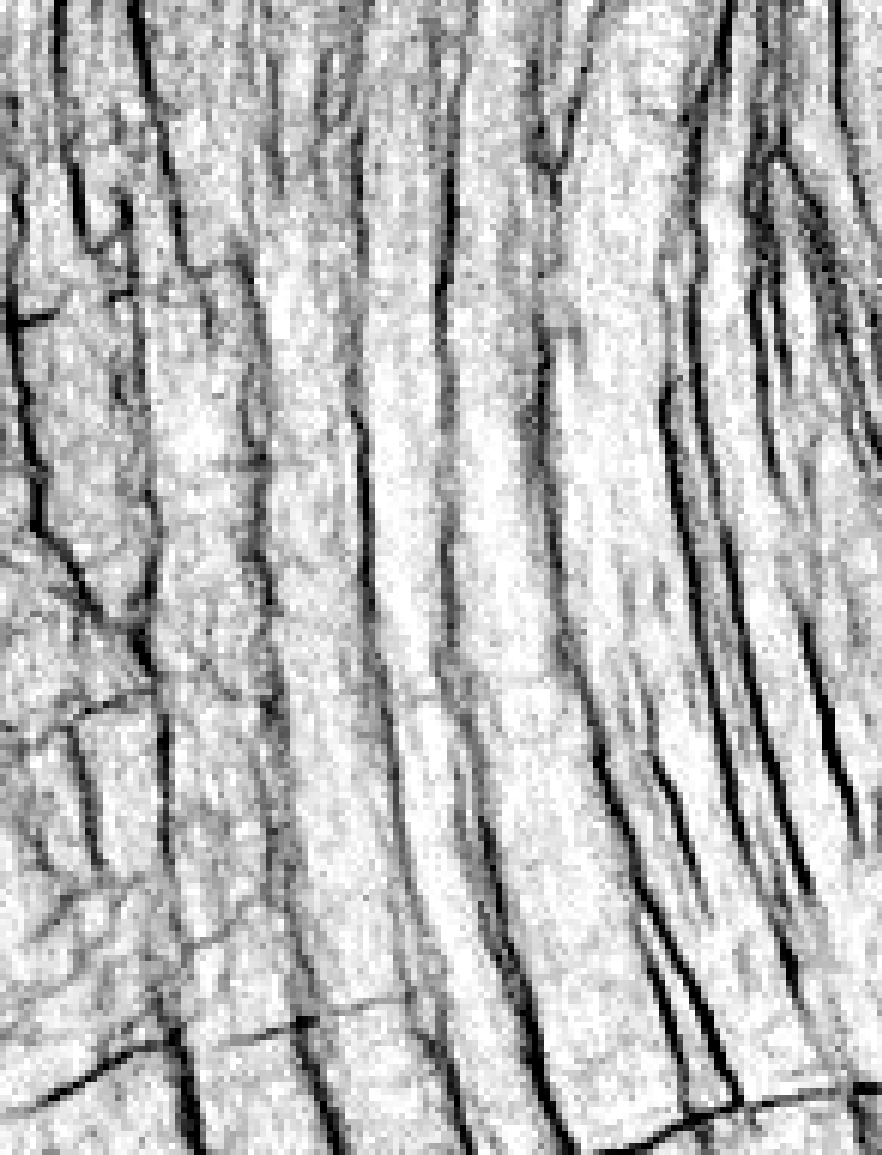}
    \end{tabular}

    \caption{
        Representative seismic reconstructions with $\rho=0.25$ and
        $\alpha=0.10$. From top to bottom, the rows show time-sample
        slices $16$, $27$, and $38$. All panels in the same row use a
        common intensity range determined from the corresponding
        ground-truth slice.
    }
    \label{fig:seismic-reconstruction}
\end{figure}

\FloatBarrier

\section{Conclusion}

This work studied robust low-tubal-rank tensor completion under tensor cross-concentrated sampling (t-CCS) in the presence of sparse gross outliers. To address this problem, we proposed Robust Iterative t-CUR (R-ItCUR), a tensor-native algorithm that combines adaptive blockwise Welsch correction with projected blockwise gradient descent.By maintaining an implicit t-CUR representation throughout the iterations, R-ItCUR avoids reconstructing the ambient tensor and eliminates the need for full-tensor t-SVD computations.

Extensive experiments on synthetic tensors, cardiac MRI data, and three-dimensional seismic data demonstrate the effectiveness and robustness of the proposed method. On synthetic problems, R-ItCUR exhibited stable convergence behavior and accurately separated sparse corruptions from the underlying low-tubal-rank structure across a range of tubal ranks and corruption levels. On the real datasets, R-ItCUR achieved reconstruction quality comparable to ITCURTC in the absence of outliers while exhibiting substantially greater robustness as the corruption rate increased. Under identical t-CCS observations, it consistently outperformed competing methods designed for robust data recovery.
The experiments also highlight the importance of explicitly exploiting the observation geometry. While generic robust completion methods can perform competitively under uniform sampling, their performance deteriorates under cross-concentrated sampling patterns. In contrast, R-ItCUR is specifically designed for the t-CCS setting and directly leverages the sampled tensor cross structure, resulting in improved robustness and reconstruction accuracy.

Overall, the results demonstrate that robust low-tubal-rank tensor completion can be performed effectively under t-CCS observations. The proposed framework provides a practical and scalable solution for robust tensor recovery from cross-concentrated samples, even in the presence of sparse gross corruptions.

\section*{Acknowledgments}
 This work is based upon work supported by the National Science Foundation under Grant No. DMS-1929284 while the authors were in residence at the Institute for Computational and Experimental Research in Mathematics in Providence, RI, during the ``Tensor Analysis for Large-Scale Data'' Colloborate@ICERM program. The work of the authors is partially supported by the National Science Foundation under Grant No. DMS-2603463, DMS-2603464, DMS-2607997, the Cancer Prevention and Research Institute of Texas under Grant RP260222, the National Institutes of Health under Grant K25CA317042, and the Innovation in Cancer Informatics Fund. On behalf of all authors, the corresponding author states that there is no conflict of interest.

\bibliographystyle{unsrt}
\bibliography{references}
\end{document}